\documentclass{article}
\usepackage{times}
\usepackage[T1]{fontenc}
\usepackage[utf8]{inputenc}
\usepackage{amsmath,amssymb,amsthm,mathtools}
\usepackage{graphicx,booktabs,tabularx,array}
\usepackage[export]{adjustbox}
\usepackage{xspace}
\usepackage{placeins}
\usepackage{algorithm,algorithmic}
\usepackage{microtype}
\usepackage{etoolbox}
\usepackage{url}            
\usepackage{amsfonts}       
\usepackage{nicefrac}       
\usepackage{comment}
\usepackage{mathtools}
\usepackage{amsmath}
\usepackage{array,makecell}
\usepackage{tabularx}
\usepackage{bbm}
\usepackage[toc,page]{appendix}
\usepackage{caption}
\usepackage{setspace}
\usepackage{xcolor}         
\usepackage{authblk}
\usepackage{listings}
\usepackage{wrapfig}
\usepackage{algorithm}
\usepackage{algorithmic}
\usepackage{float}
\usepackage{xspace}

\usepackage[hidelinks]{hyperref}
\newtheorem{theorem}{Theorem}
\newtheorem{lemma}{Lemma}
\newtheorem{proposition}{Proposition}
\newtheorem{corollary}{Corollary}
\theoremstyle{definition}

\newcommand{\E}{\mathbb E}
\newcommand{\Prob}{\mathbb P}
\newcommand{\ind}[1]{\mathbf 1\{#1\}}
\newcommand{\fhat}{\widehat f}
\newcommand{\muhat}{\widehat\mu}
\newcommand{\calT}{\mathcal T}
\newcommand{\calF}{\mathcal F}
\newcommand{\refdata}{\mathcal R}
\newcommand{\prior}{\tau_{\mathrm{prior}}}
\newcommand{\clip}{\operatorname{clip}}
\newcommand{\HARP}{\texttt{HARP}\xspace}
\newcommand{\DAPRO}{\texttt{DAPRO}\xspace}

\usepackage[numbers,sort&compress, square]{natbib}
\usepackage[margin=1.1in]{geometry}
\newcommand{\Dcal}{{\mathcal{D}_\mathrm{cal}}}

\newcommand{\calA}{\mathcal{A}}

\newcommand{\lpbvariant}{\HARP-D (+depth)}
\newcommand{\upbvariant}{\HARP-D (+depth)}
\newcommand{\metricvariant}{\HARP-D (no depth)}

\IfFileExists{manuscript_assets/selected_variants.tex}{\input{manuscript_assets/selected_variants.tex}}{}
\newcommand{\lpbchoice}{\lpbvariant; chosen once for the LPB task by the task-level criterion in Appendix~\ref{app:selection}.}

\newcommand{\metricchoice}{\metricvariant; chosen once for metric estimation by the task-level criterion in Appendix~\ref{app:selection}.}
\newcommand{\fullmethodname}{\textbf{H}ard-budget \textbf{A}llocation with \textbf{R}eflow for \textbf{P}redictive calibration}
\newcommand{\paperfigure}[2]{%
\includegraphics[width=\linewidth]{manuscript_assets/figures/#1.pdf}}
\title{Dynamic Budget Allocation for LLM Evaluation under Hard Resource Constraints}

\author[1]{Shai Feldman}
\author[1,2]{Yaniv Romano}
\affil[1]{Department of Computer Science, Technion IIT, Israel}
\affil[2]{Department of Electrical and Computer Engineering, Technion IIT, Israel}

\begin{document}
\maketitle
\begin{abstract}
We evaluate large language models (LLMs) in multi-turn interactions through their \emph{time-to-event}: the number of interaction steps required to produce an event of interest, such as a successful jailbreak or agentic task completion. Under limited compute, interactions may be terminated before the event occurs, so that event times are only partially observed (censored). 
Existing allocation methods for calibrating time-to-event bounds satisfy the budget only in expectation and can exceed the available budget on a particular evaluation run. 
Enforcing a hard constraint is particularly challenging as the cost of a trajectory is initially unknown.
We introduce \emph{\fullmethodname} (\HARP), a budget allocation that satisfies hard resource constraints and adaptively reallocates unused budget.
We show how to use \HARP to construct lower predictive bounds (LPBs) on the time-to-event and to estimate evaluation metrics such as the jailbreak rate on a fixed benchmark.
Although \HARP induces dependence in acquisition decisions across different trajectories, we prove that \HARP never exceeds the target budget, that its LPBs have finite-sample coverage guarantees, and that its metric estimates are unbiased.
Experiments on agentic task success, LLM jailbreaks, toxic content generation, and RAG hallucinations show that \HARP achieves coverage close to the nominal level with low variance, while never exceeding the given budget.
\end{abstract}

\section{Introduction}
\label{sec:intro}
Evaluating LLMs and agents through repeated interaction can require substantially more computation than assessing a single response. For example, a model may initially refuse a harmful request but comply after several revisions, or initially fail a coding task while succeeding with further assistance. Time to such an event describes behavior that a one-turn success rate does not capture. Lower and upper predictive bounds (LPBs and UPBs) on this time-to-event can support evaluating the safety and utility of agents and LLMs, but constructing reliable bounds requires observations from trajectories of unknown length.

In this work, we consider a \emph{hard} budget constraint for running LLM interactions, as required in many practical scenarios. A policy satisfying the budget in expectation may exceed it on specific runs. Reserving the maximal trajectory length for each sampled prompt may satisfy the deterministic constraint, but it can underutilize the budget when the event occurs early; conversely, acquiring until exhaustion and treating the observed sample as representative may induce a distribution shift which invalidates the theoretical guarantees. 

Designing this budget allocation under hard constraints raises three challenges. First, the budget cost of a trajectory is stochastic and unknown before acquisition. 
Second, the budget allocation must incorporate a dependence of the budget spent on different trajectories. Thus, the theoretical coverage guarantee must account for dependence between trajectories induced by the allocation. Prior works~\citep{gui2024conformalized,davidovconformalized, davidov2026calibrated, feldman2026spend} require that the censoring times are independent across different trajectories.
Third, samples in which an event occurs early save budget units, which we aim to reflow to other samples. This reallocation must preserve the feasibility of the remaining trajectories, so that the deterministic constraint holds throughout execution.

\citet{davidov2026calibrated} proposed a budget allocation and a calibration scheme for constructing calibrated LPBs with theoretical coverage guarantees.
\DAPRO~\citep{feldman2026spend} extended this approach to a dynamic budget allocation for multi-turn LLM evaluation that leverages the trajectory history to allocate the budget more efficiently. Specifically, it can dynamically choose to stop acquiring a trajectory at any turn, and the remaining budget will be exploited by other samples.
While both approaches provide informative predictive bounds and metric estimates with theoretical guarantees, they satisfy the budget constraint only in expectation rather than deterministically.

Beyond controlling the budget deterministically, we seek to derive low-variance estimators because evaluation is typically performed on a single realized dataset. Specifically, lower variability in the coverage rate of predictive bounds means that the coverage attained on a given dataset is more likely to be close to the nominal level \(1-\alpha\). For metric estimation, even if two estimators are unbiased, the lower-variance estimator is preferable as, on any single dataset, it is expected to yield an estimate closer to the true marginal value. Thus, the estimator with the lower variance provides more reliable evaluations.

The work of~\citet{pmlr-v286-yamin25a} studies budget allocation under hard constraints. In particular, they use dependent rounding~\citep{madow1944theory, deville2004efficient, gandhi2006dependent} for budget-constrained experimental design. Nonetheless, these works do not show how to construct calibrated predictive bounds with finite-sample distribution-free guarantees under hard constraints where the acquisition cost of each sample is arbitrary, stochastic, and initially unknown.

\subsection{Our contribution}

In this work, we introduce \emph{\fullmethodname} (\textbf{\HARP}), a novel budget allocation with deterministic guarantees to form a statistically valid bound for the time-to-event or a general evaluation metric. Our method has both static~\citep{davidov2026calibrated} and dynamic~\citep{feldman2026spend} variants. We outline our main contributions as follows:
\begin{itemize}
\item \textbf{Coverage and unbiasedness guarantees under dependent acquisition:}
We show that \HARP constructs predictive bounds with finite-sample coverage guarantees and produces unbiased metric estimates even though its acquisition decisions are dependent across samples.
Existing guarantees~\citep{gui2024conformalized,
davidov2026calibrated, feldman2026spend} assume that the acquisition is independent across samples to provide such guarantees. \HARP violates this assumption by design: decisions are coupled within groups. Our proof exploits the specific allocation structure of \HARP.
\item \textbf{Budget validity}: We formally prove that \HARP satisfies the deterministic budget constraint. Empirically, we observe that \HARP has a higher budget utilization compared to prior methods~\citep{feldman2026spend}.
\item \textbf{Depth and execution variants:} We develop a depth rule that, as observed empirically, leads to reduced coverage variance of the constructed calibrated predictive bounds and was not considered in prior works.
\item \textbf{Empirical Evaluation:} We demonstrate \HARP across diverse tasks, including agentic task success, toxic content, LLM jailbreaks, and RAG-based hallucinations. Our experiments show that our method satisfies the deterministic budget constraint and achieves low variance despite our method having a more restrictive deterministic resource constraint.

\end{itemize}

The software package implementing our method is available at \url{https://github.com/Shai128/harp}

\section{Background}
\label{sec:background}
\paragraph{Multi-turn evaluation.}
Let $X$ denote an initial prompt and $H(t)$ its trajectory history after $t$ interactions. 
We suppose we are given $N$ initial prompts $\Dcal=\{X_i\}_{i=1}^N$ as the calibration data.

We assume a model \(\calA\) interacts sequentially with a target model \(\mathcal{G}\) from an initial prompt \(X_i \in \Dcal\). The model \(\calA\)  aims to elicit a key event. The interaction begins with \(A_i(1)=X_i\). At each step \(t\), the target model produces a response \(R_i(t)=\mathcal{G}(A_i(t))\), which is then evaluated by an audit function \(\mathcal{J}\) to obtain an indicator for the event of interest
$
Y_i(t)=\mathcal{J}(A_i(t),R_i(t))\in\{0,1\},
$
where \(Y_i(t)=1\) indicates that the event has occurred. If \(Y_i(t)=1\), the interaction terminates. Otherwise, \(\calA\) uses the interaction history \(H_i(t)\), which contains previous prompts, responses, and audit annotations, to generate the next prompt,
\[
A_i(t+1)=\calA(H_i(t)).
\]
The process continues until the event occurs or the interaction is otherwise terminated.

The corresponding time-to-event is defined as $T_i = \min \{t \geq 1: Y_i(t) = 1\}$. To make this procedure tractable, we set the maximal sequence length to $M$.
For sample $i$, the censoring time $C_i\le M$ determines the maximal observed length $\min(T_i,C_i)$. If the event has not occurred by turn $C_i$, the trajectory is right-censored and its event time is unobserved. 

We use a pre-trained survival model that generates predictions $\fhat_\tau(X)$ for the $\tau$-th quantile of the time-to-event. 
Any model that estimates conditional quantiles can be used, e.g., quantile regression~\citep{QR, beyond_pinball_loss, feldman2021improving}.
The quantile predictions are nondecreasing in $\tau$, as quantiles of a single predicted distribution, and the candidate set $\calT$ contains a level $\tau_0$ with $\fhat_{\tau_0}\equiv0$.
We trim the predictions at $M$, so that $\fhat_\tau(X) \leq M$, to correspond to the maximal length we allow. We define
\begin{equation}
 A_i(\tau)=\ind{T_i<\fhat_\tau(X_i)},\qquad
 \mu(\tau)=\Prob\{T<\fhat_\tau(X)\}.                 \label{eq:loss}
\end{equation}
Given $\alpha,\delta\in(0,1)$, our goal is to construct a predictive bound $\fhat_{\widehat\tau}$ with miscoverage at most $\alpha$ up to a deviation $\Delta$ which arises from the uncertainty due to the finite calibration sample. Specifically, we seek the PAC guarantee
\begin{equation}
 \Prob\{\mu(\widehat\tau)>\alpha+\Delta\}\le\delta. \label{eq:pac}
\end{equation}
The outer probability is taken over calibration trajectories and acquisition randomization, while the inner one concerns an independent test point sampled from the same population. Thus, this guarantee differs from marginal coverage averaged over the multiple runs of the calibration procedure, common in classic conformal prediction~\citep{vovk2005algorithmic}. Throughout this work, the randomizations are taken conditionally on the pre-trained predictive model, when used, and an independent reference dataset.

\paragraph{Random acquisition and calibration.}
Let $Z_i$ indicate that sample $i$ is \emph{resolved}, i.e., that its observed prefix determines $A_i(\tau)$ for every candidate $\tau$. Let $\pi_i$ be the probability that $Z_i=1$, conditional on the complete trajectory of sample $i$ and on the history of the previously acquired groups (Appendix~\ref{app:filtration}).
The Horvitz--Thompson estimator \citep{Horvitz01121952} then gives
\begin{equation}
 \muhat(\tau)=\frac1N\sum_{i=1}^N\frac{Z_i A_i(\tau)}{\pi_i},\qquad
 \widehat\tau=\max\left\{\tau\in\calT:
       \max_{u\in\calT:u\le\tau}\muhat(u)\le\alpha\right\}.
 \label{eq:calibration}
\end{equation}
The denominator considers every sample in the calibration set, including prompts with unobserved event times. In practice, we only need access to the probability $\pi_i$ for samples with $Z_i A_i(\tau) = 1$.

\paragraph{From static allocation to \DAPRO.}
\citet{davidov2026calibrated} propose optimizing the sampling probabilities to minimize the variance of the empirical coverage rate. \DAPRO~\citep{feldman2026spend} extends this approach by updating continuation decisions dynamically from the trajectory history. It splits the calibration data into two: the first is used to learn an optimized allocation policy deployed on the second split. It further includes an internal budget calibration step based on conformal risk control (CRC)~\citep{angelopoulos2024conformal} to provide an expected budget guarantee.

For UPBs, the coverage error of the bound $\ind{T>\fhat_\tau(X)}$ decreases with $\tau$, and an infinite bound has zero error. Population metrics instead target an expectation, without selecting a quantile level. These changes require different acquisition depths and estimators. We therefore develop them separately in Appendices~\ref{app:upb} and~\ref{app:metrics}.

\section{Proposed Method and Theory}
\label{sec:method}
We split the calibration data into groups and allocate the budget iteratively over groups, based on the remaining budget each iteration. For each group, we set randomized censoring times so that the worst-case cost does not exceed the remaining budget, acquire the interactions based on them, and update the remaining budget according to the turns actually taken.

\subsection{Reservations, commitments, and reflow}
We rely on a reference set $\refdata$ of size $|\refdata|=R$ to compute estimated predictions, without spending budget on it.
We exclude the reference samples from the calibration estimator. In the experiments, the total cap is $B=N_0\bar B$, where $N_0 = N + R$ and $\bar B$ is the budget per parent prompt. The reference therefore saves interactions at the cost of a smaller estimation sample.

We randomly divide the calibration samples into $K=\lceil N/m\rceil$ groups $I_k$, each of size $m_k\le m$. We assign sample $i$ a maximal execution horizon $d_i\in\{0,\ldots,M\}$, determined by the reference design and the sample's own initial features. Let $L_{k-1}$ be the remaining budget before group $k$, initialized at $L_0=B$. We choose $\eta>0$ and require
\begin{equation}
 B\ge K(1+\eta)M.                                     \label{eq:feasible}
\end{equation}
The budget allowance and the pacing budget of the current group are
\begin{equation}
 \kappa_k=L_{k-1}-(K-k)(1+\eta)M,\qquad
 \beta_k=\frac{L_{k-1}}{K-k+1}.                         \label{eq:ledger}
\end{equation}
The allowance reserves $(1+\eta)M$ turns for each later group, while the pacing budget distributes the remaining budget over the remaining groups. Consequently, an inaccurate estimate of the consumed budget cost might not attain $\beta_k$, but it will not violate the allowance $\kappa_k$.

We denote by $\hat{r}_j(\tau)$ the estimated probability for miscoverage event $\ind{T_j<\fhat_\tau(X_j)}$. Next, we define
\begin{equation}
    \tau_\star := \min \arg \min_{\tau \in \calT} \left| \frac{1}{R} \sum_{j\in \refdata}\hat{r}_j(\tau) - \alpha \right|.
\end{equation}
The average predicted event probability is formulated as $\bar r := R^{-1}\sum_{j \in \refdata}\hat{r}_j(\tau_\star)$. 
We denote the estimated expected cost by $\widehat c_i=\widehat\E[\min(T_i,d_i)\mid X_i]$, and the average estimated cost function over the reference data is thus
\begin{equation}
    c_{\refdata}(\lambda) :=R^{-1}\sum_{j \in \refdata} \pi_{j}(\lambda)c_j \qquad
    \pi_j(\lambda)=\min\left(1, \max\left(\sqrt{\nu_j/(\lambda c_j)},p_{\min}\right) \right).
\end{equation}
We write $\nu_i=(1-\omega)+\omega\widehat r_i(\tau_\star)/\bar r$, with $\omega=1/2$. We choose the multiplier $\lambda_k$ to satisfy $c_{\refdata}(\lambda_k)\approx\beta_k/m_k$ and, for $d_i>0$, set
\begin{equation}
 p_i^{\rm prop}=\clip\!\left(\sqrt{\frac{\nu_i}{\lambda_k\widehat c_i}},p_{\min},1\right),
 \qquad p_i^c=\min\left\{p_i^{\rm prop},1,
                    \frac{\kappa_k-M}{m_kd_i}\right\}. \label{eq:commit}
\end{equation}
The specific formulation of $p_i^{\rm prop}$ is chosen as a heuristic, based on the closed-form minimizer derived by~\citet{davidov2026calibrated}. Our guarantees hold for any definition of $p_i^{\rm prop}$ that depends only on the sample's own features and previous groups.

We next apply weighted pivotal rounding~\citep{byrka2016improved} to sample a commitment indicator $\zeta_i$ with probability $p_i^c$.
The commitment indicator \(\zeta_i\) determines whether we commit to allocating budget to sample \(i\) to observe its event time \(T_i\); if \(\zeta_i=0\), we set \(C_i=0\). This pivotal rounding algorithm begins by initializing $\zeta_i = p_i^c$, and runs iteratively, where at each iteration, it randomly rounds at least one index $j$ into $\zeta_j \in \{0,1\}$, while preserving the identity  $\sum_{i} d_i \zeta_i$ throughout the process. The rounding procedure terminates once at most one coordinate remains unrounded. This last coordinate is then rounded with a Bernoulli draw that changes the weighted sum by at most $\max_i d_i \leq M$.
Combining this property with the cap in~\eqref{eq:commit} gives
\begin{equation}
 \E[\zeta_i\mid\text{group data, past}]=p_i^c,\qquad
 \sum_{i\in I_k}d_i\zeta_i\le\sum_{i\in I_k}d_ip_i^c+M\le\kappa_k.
 \label{eq:roundbudget}
\end{equation}
In Appendix~\ref{app:pivot} we outline the full algorithm we use and prove the properties of this algorithm. After executing the group, we subtract its actual paid cost $S_k$ from the ledger, $L_k=L_{k-1}-S_k$. Thus, turns saved by early events or randomized stopping are available to the next group.

\subsection{Completion, continuation, and depth}
\label{sec:variants}
\textbf{\HARP-Commit} (\textbf{\HARP-C}) runs each sample with $\zeta_i=1$ until resolving it, i.e., reaching its event time or horizon $d_i$, costing $\min(T_i,d_i)$ units for the $i$-th sample, and the resolution probability is $p_i^c$. This version adapts the budget \emph{between groups} through the ledger. \textbf{\HARP-Dynamic} (\textbf{\HARP-D}) additionally adapts the budget dynamically \emph{within a trajectory}. Before spending budget for turn $t$, it uses only $H_i(t-1)$ to compute a score $\nu_i(t)$, which is a smoothed miscoverage probability at step $t$, and estimates the remaining cost $\widehat e_i(t)$. 
We denote by $a_i(t)$ the probability of reaching time $t$, initialized at $p_i^c$. We set
\begin{equation}
 u_i(t)=\min\!\left\{1,\sqrt{\frac{\nu_i(t)}{\lambda_k\widehat e_i(t)}}\right\},\quad
 P_i(t)=\clip\left(\frac{u_i(t)}{a_i(t-1)},p_{\min},1\right),\quad
 a_i(t)=a_i(t-1)P_i(t).                               \label{eq:continue}
\end{equation}
So, at each step $t$, we continue to the next step with probability $P_i(t)$, and this random draw is independent across different samples and of the randomization of $\zeta_i$.
Importantly, we do not continue running a trajectory that was stopped by a continuation coin. 
For $b_i=\min(T_i,d_i)$, sample $i$ is resolved, i.e., $Z_i=\ind{C_i\ge b_i}$, when acquisition reaches its event time $T_i$ or its horizon $d_i$. Conditional on its trajectory and the previous groups, this occurs with probability
\begin{equation}
 \pi_i=\begin{cases}
 p_i^c, \quad \text{mode $C$},\\ 
 p_i^c\prod_{t=1}^{b_i}P_i(t), \quad \text{mode $D$}
 \end{cases}                  \label{eq:pi}
\end{equation}
These $Z_i$ and $\pi_i$ are used in the estimator~\eqref{eq:calibration}.

\paragraph{Depth.}

The vanilla version of \HARP sets the acquisition horizon as $d_i=\fhat_{\prior}(X_i)$, where $\prior$ is a prior quantile level chosen to reflect the belief that the calibrated level satisfies ($\widehat\tau\leq\prior$). Prior work provides examples of such choices: for a target coverage level of $1-\alpha=90\%$,~\citep{davidov2026calibrated, feldman2026spend} consider $(\prior\in\{0.3,0.56\})$.

The \emph{depth} version of \HARP chooses a quantile level $\ell\le\prior$ based on the initial reference predictions and restricts the candidate set to \(\tau\leq\ell\). It then sets the resolution horizon to $d_i=(\fhat_\ell(X_i)-1)_+$, which is sufficient to determine all retained events \(T_i<\widehat f_\tau(X_i)\). By avoiding acquisition at later interaction steps, we save budget at the cost of restricting the largest LPB that can be selected.

We choose the depth $\ell$ using only the initial predictions for the independent reference prompts $\refdata$. For each candidate $\tau$, these predictions estimate the average resolution cost, LPB miscoverage, and LPB size. We consider five risk scenarios, multiplying the predicted miscoverage curve by $\gamma\in\{.6,.8,1,1.25,1.6\}$ while keeping predicted costs and sizes fixed. For each $\tau$ and scenario $\gamma$, we estimate the largest mean LPB size among its retained candidates whose modeled risk plus a planning uncertainty penalty is at most $\alpha_{\rm design}$. 
The regret is defined as the signed difference of the LPB size compared to the best observed in the same configuration of risk $\gamma$.
We choose the level $\ell$ with the smallest worst-case regret, then increase it if a deeper endpoint still allows the planner's hypothetical acquisition fraction to remain one. Appendix~\ref{app:depth} provides the full derivations. 

Truncating the candidate family would change the UPB and metric estimation tasks. For these tasks, we instead independently assign each sample either a shallow or a full horizon, choosing the full-horizon branch with probability $.25$ and correcting for this draw in the weights; see Appendices~\ref{app:upb}--\ref{app:metrics} for more details. 

\begin{algorithm}[t]
\caption{Core \HARP acquisition and LPB calibration}
\label{alg:core}
\begin{algorithmic}[1]
\REQUIRE Pre-trained model, independent reference, calibration samples, parameters $B,M,m,\eta,p_{\min}$
\STATE Fix reference design and retained family; set $L_0=B$; form random groups.
\FOR{$k=1,\ldots,K$}
\STATE Compute reserve, allowance, and pacing target by~\eqref{eq:ledger}.
\STATE Choose $\lambda_k$ from the reference curve; cap commitments by~\eqref{eq:commit}.
\STATE Draw weighted pivotal commitments with horizons $d_i$ as costs (Algorithm~\ref{alg:pivot}).
\STATE Execute C or D; record paid turns, resolution indicators, and propensities~\eqref{eq:pi}.
\STATE Update $L_k=L_{k-1}-S_k$ using actual group spend.
\ENDFOR
\STATE Compute~\eqref{eq:calibration} over all $N$ working samples; return $\fhat_{\widehat\tau}$.
\end{algorithmic}
\end{algorithm}

\subsection{Theoretical coverage and budget guarantees}
\label{sec:guarantees}
We begin with the budget guarantee, which holds for every realization of the acquisition procedure.
\begin{theorem}[Deterministic budget control]\label{thm:budget}
Under~\eqref{eq:feasible}, \HARP satisfies $\sum_{k=1}^K S_k\le B$ almost surely.
\end{theorem}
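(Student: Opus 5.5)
We argue by induction on the group index $k$, maintaining the invariant
\[
L_{k}\ \ge\ (K-k)(1+\eta)M \qquad\text{for } k=0,1,\ldots,K,
\]
which holds almost surely. The base case $k=0$ is exactly the feasibility condition~\eqref{eq:feasible}, since $L_0=B\ge K(1+\eta)M$. The claim then follows from the case $k=K$, because $L_K=B-\sum_{k=1}^K S_k\ge 0$. The invariant also shows that the allowance $\kappa_k=L_{k-1}-(K-k)(1+\eta)M$ in~\eqref{eq:ledger} is always at least $(1+\eta)M>M$. Hence the cap $(\kappa_k-M)/(m_kd_i)$ in~\eqref{eq:commit} is positive and every $p_i^c$ is a valid probability.

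For the inductive step, assume the invariant for $k-1$ and fix any realization of the past. The first task is to bound the actual spend of group $k$ pathwise. In both modes, sample $i$ pays zero turns if $\zeta_i=0$, since then $C_i=0$. If $\zeta_i=1$, it pays at most $\min(T_i,d_i)\le d_i$ turns. In mode C this holds because execution stops at the event or at the horizon $d_i$. In mode D the continuation coins can only stop the trajectory earlier, never extend it, and a stopped trajectory is never resumed. Therefore $S_k\le\sum_{i\in I_k}d_i\zeta_i$ holds surely.

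Next we invoke the pivotal-rounding bound~\eqref{eq:roundbudget}, proved in Appendix~\ref{app:pivot}. Its unrounding phase preserves $\sum_i d_i\zeta_i$ exactly, and the final Bernoulli step changes the sum by at most $\max_i d_i\le M$. Combined with $p_i^c\le(\kappa_k-M)/(m_kd_i)$, this gives
\[
\sum_{i\in I_k}d_ip_i^c\le \kappa_k-M,
\]
and hence $\sum_i d_i\zeta_i\le\kappa_k$ on every realization. Samples with $d_i=0$ contribute nothing to either side, so the cap is not needed for them. We conclude that
\[
L_k=L_{k-1}-S_k\ \ge\ L_{k-1}-\kappa_k=(K-k)(1+\eta)M,
\]
which closes the induction.

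The step I expect to need the most care is the pathwise cost bound $S_k\le\sum_i d_i\zeta_i$ in mode D, and the "almost surely" character of the rounding bound. For the first, we must confirm that turns are paid only up to $C_i\le d_i$. This needs the convention that the turn-$t$ decision is made before that turn is paid, and that the depth variant's horizon $(\fhat_\ell-1)_+$ is respected. For the second, we must confirm that the rounding deviation bound holds for every outcome of the random pivots, not merely in expectation. We would check both directly against Algorithm~\ref{alg:pivot}. Note that $\eta>0$ is not needed for this argument; it serves only to keep the allowance strictly above $M$.
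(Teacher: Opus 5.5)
Your proposal is correct and follows essentially the same route as the paper's proof: induction on the reservation invariant $L_{k-1}\ge(K-k+1)(1+\eta)M$, the cap in~\eqref{eq:commit} giving $\sum_{i\in I_k}d_ip_i^c\le\kappa_k-M$, the pathwise pivotal bound adding at most $M$, and the observation that event and continuation stopping only reduce spend below $\sum_i d_i\zeta_i$. Your side remark is also accurate: $\eta>0$ is not needed for the budget bound itself and matters only for strict positivity of the cap, which is used in Lemma~\ref{lem:positive}.
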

Crucially, this guarantee holds even when the predictions are inaccurate or when the data is not i.i.d. The full proof is provided in Appendix~\ref{app:budget}.

For the proof, we define the filtration $\calF_{k-1}$ to contain the fixed design objects, all complete trajectories in previous groups, and all random coins used in those groups.
For a fixed candidate, define its predictable second-moment average
\begin{equation}
 v(\tau)=\frac1N\sum_{k=1}^K\sum_{i\in I_k}
       \E\left[\frac{A_i(\tau)}{\pi_i}\,\middle|\,\calF_{k-1}\right].
 \label{eq:varianceenvelope}
\end{equation}
The average is random but predictable. We assume an upper bound $v_\star$ that is deterministic conditional on the fixed reference set and holds almost surely for every retained candidate. This is a population envelope, rather than an estimated sample variance.
\begin{theorem}[LPB PAC bound]\label{thm:lpb}
Assuming the calibration and test data are i.i.d., suppose $v(\tau)\le v_\star$ almost surely for every candidate $\tau$. Set $L=\log(1/\delta)$. For any $\Delta\ge0$ satisfying
\begin{equation}
 \Delta\ge\frac{L}{3N}+\sqrt{\frac{L^2}{9N^2}
                       +\frac{2(v_\star-\alpha^2)_+L}{N}},
 \label{eq:lpbdelta}
\end{equation}
the LPB chosen by~\eqref{eq:calibration} satisfies~\eqref{eq:pac}.
\end{theorem}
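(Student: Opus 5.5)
The proof reduces the PAC statement to a one-sided Freedman (Bernstein-type) inequality for a martingale indexed by groups, applied at a single candidate. Let $\tau^\dagger$ be the largest candidate with $\mu(\tau)\le\alpha+\Delta$. This is a deterministic quantity given the pre-trained model and the reference set, and it exists because $\fhat_{\tau_0}\equiv 0$ gives $\mu(\tau_0)=0$. Coverage is monotone in $\tau$, since $\fhat_\tau$ is nondecreasing. Suppose $\mu(\widehat\tau)>\alpha+\Delta$. Then $\widehat\tau>\tau^\dagger$, and there is a smallest candidate $\tau^{+}\le\widehat\tau$ with $\mu(\tau^+)>\alpha+\Delta$. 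The rule~\eqref{eq:calibration} takes a running maximum over $u\le\widehat\tau$, so it forces $\muhat(\tau^+)\le\alpha$. The failure event is therefore contained in $\{\muhat(\tau^+)\le\alpha\}$ for a fixed, non-data-dependent candidate $\tau^+$. No union bound over $\calT$ is needed, which matches the absence of $\log|\calT|$ in~\eqref{eq:lpbdelta}. It remains to show $\Prob\{\muhat(\tau^+)\le \mu(\tau^+)-\Delta\}\le\delta$.

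Fix $\tau=\tau^+$ and write $N\muhat(\tau)-N\mu(\tau)=\sum_{k=1}^K D_k$, where
\[
D_k=\sum_{i\in I_k}\left(\frac{Z_iA_i(\tau)}{\pi_i}-\mu(\tau)\right).
\]
The main step is $\E[D_k\mid\calF_{k-1}]=0$, despite the coupling of commitments inside group $k$ through pivotal rounding. The groups are formed at random, and the trajectories in $I_k$ are i.i.d. and independent of $\calF_{k-1}$. This holds because $\calF_{k-1}$ contains only earlier groups' trajectories and coins, and everything chosen for group $k$ ($\kappa_k,\beta_k,\lambda_k,d_i,p_i^c$) is $\calF_{k-1}$-measurable or depends only on the sample's own features.

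Condition additionally on the complete trajectories of group $k$. By~\eqref{eq:roundbudget}, $\E[\zeta_i\mid\cdot]=p_i^c$ marginally, even though the $\zeta_i$ are dependent. In mode D the continuation coins are independent of $\zeta$ and of each other, so each marginal satisfies $\E[Z_i\mid\text{trajectories},\calF_{k-1}]=\pi_i$ as in~\eqref{eq:pi}. Linearity of expectation then needs only these marginals, so the dependence across $i$ does not matter. Resolution determines $A_i(\tau)$ because $d_i$ covers $\fhat_\tau-1$ for retained $\tau$. We get $\E[Z_iA_i/\pi_i\mid\calF_{k-1}]=\E[A_i(\tau)]=\mu(\tau)$, and positivity $\pi_i\ge p^c_i p_{\min}^{b_i}>0$ is needed here. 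This is the step I expect to be the main obstacle. The subtle points are showing that $p_i^c$ uses no information from group $k$'s outcomes beyond own features, that the cap $(\kappa_k-M)/(m_kd_i)$ is $\calF_{k-1}$-measurable, and that $Z_i$ interacts correctly with the budget-feasibility logic, i.e. that no trajectory is cut for budget reasons beyond $C_i$.

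For the lower tail we need the conditional variance. Each summand is bounded below by $-\mu(\tau)\ge-1$, so the negated increments $-D_k/N$ are bounded above. Using $Z_i^2=Z_i$ and $A_i^2=A_i$,
\[
\sum_k\operatorname{Var}(D_k\mid\calF_{k-1})\le N\big(v(\tau)-\mu(\tau)^2\big)
\]
after handling the within-group cross terms. Pivotal rounding is negatively correlated (Appendix~\ref{app:pivot}), so for nonnegative weights the cross covariances can be bounded by zero. On the failure region $\mu(\tau)>\alpha$, so the bound is at most $N(v_\star-\alpha^2)_+$. Freedman's inequality with increments bounded by $1$ on the relevant side then gives
\[
\Prob\{\textstyle\sum_k D_k\le -N\Delta\}\le\exp\!\Big(-\frac{N^2\Delta^2}{2N(v_\star-\alpha^2)_+ +2N\Delta/3}\Big).
\]
Setting the right side to at most $\delta$ and solving the quadratic $\Delta^2-\tfrac{2L}{3N}\Delta-\tfrac{2(v_\star-\alpha^2)_+L}{N}\ge0$ yields exactly~\eqref{eq:lpbdelta}.

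I will verify carefully that the one-sided increment bound is $1$ and not $1/\pi_i$: the large-weight terms are positive and only affect the upper tail. The group-level increment sums up to $m$ terms, so the per-term bound needs a Bernstein argument at the level of individual samples within a group. I would handle this by sequencing samples inside a group and using the negative association of pivotal rounding to justify a supermartingale exponential bound.
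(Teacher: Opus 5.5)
Your reduction to the first population-unsafe candidate $\tau^+$ through the inclusive-prefix rule matches the paper. So does the conditional unbiasedness $\E[Z_iA_i(\tau)/\pi_i\mid\calF_{k-1}]=\mu(\tau)$, obtained from the pivot marginals and the sample-local continuation coins.

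The gap is in the concentration step, which is the real content of the theorem. Freedman's inequality at the group level needs an upper bound on the negated increment $-D_k=\sum_{i\in I_k}(\mu-X_i^\tau)$. That bound is $m_k\mu$, not $1$. Pairwise negative correlation does let you bound the group variance by the sum of sample variances. But the range term then becomes $mL/(3N)$ instead of $L/(3N)$, so your displayed tail bound, and hence \eqref{eq:lpbdelta}, does not follow. You notice this at the end, but the proposed repair fails. If you sequence samples within a group, then conditional on earlier commitment coins the later $\zeta_j$ no longer have mean $p_j^c$: for example, $\Prob(\zeta_2=1\mid\zeta_1=1)\le p_2^c$, typically strictly. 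So the per-sample increments are not martingale differences, and $\E[e^{-sX_2^\tau}\mid\zeta_1=1]$ can exceed the marginal value. Second-order negative correlation is also not enough, because a Bernstein bound with the per-sample range needs control of the whole moment generating function, not just the variance.

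The missing ingredient is a \emph{joint} within-group comparison of moment generating functions, with the supermartingale kept at the group level.
\begin{itemize}
\item Condition on the group's complete trajectories and counterfactual continuation streams. Then $e^{-sX_i^\tau}=1-h_i\zeta_i$ with $h_i=1-\exp\{-sU_iA_i(\tau)/\pi_i\}\in[0,1]$ for $s\ge0$.
\item The down-cylinder inequality of Lemma~\ref{lem:pivot}, expanded as in \eqref{eq:product}, gives $\E\prod_i(1-h_i\zeta_i)\le\prod_i(1-h_ip_i^c)$.
\item Integrate out the sample-local randomness, which is independent across samples given $\calF_{k-1}$. This yields $\E[e^{-s\sum_{i\in I_k}X_i^\tau}\mid\calF_{k-1}]\le\prod_{i\in I_k}\E[e^{-sX_i^\tau}\mid\calF_{k-1}]$.
\item Only now apply the one-sided Bernstein bound \eqref{eq:onesided} to each $\mu-X_i^\tau\le1$ separately, multiply within the group, and chain across groups as the exponential supermartingale \eqref{eq:supermart}. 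This produces the $L/(3N)$ term.
\end{itemize}
This route needs only the cylinder inequality for binary indicators, not full negative association of the pivotal scheme, which the paper does not establish.
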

With probability at least $1-\delta$, the selected LPB has population miscoverage at most $\alpha+\Delta$, with dependent acquisition entering the correction only through $v_\star$. See Appendix~\ref{app:lpbproof} for the proof.  
Notice that the coverage guarantee holds despite the dependence between the censoring times $C_i$. \HARP inherently violates independence across samples through both the pivotal selection and the ledger.

\paragraph{UPBs require a separate argument.}
The UPB estimator updates its predictions at every reached prefix:
\begin{equation}
 V_i(\tau)=m_{i0}(\tau)+\sum_t\frac{R_{it}}{\rho_{it}}
                        [m_{it}(\tau)-m_{i,t-1}(\tau)].
 \label{eq:aht}
\end{equation}
Here $m_{it}\in[0,1]$ predicts the finite-bound loss. We replace it with the true loss upon resolution and keep it fixed thereafter. The indicator $R_{it}$ denotes whether turn $t$ is reached, and $\rho_{it}$ accounts for commitment, continuation, and the insured-depth correction when used. Since the prediction increments can be signed, this estimator requires a different argument from the nonnegative LPB product bound.
\begin{theorem}[UPB PAC bound for signed updates]\label{thm:upb}
Assuming the calibration and test data are i.i.d., use the depth/propensity and prediction-path construction of Appendix~\ref{app:upb}. Suppose the effective reach weights $\rho_{it}$ are nonincreasing along each prefix (equivalently, their inverses are nondecreasing) and $\rho_{it}\ge1/W>0$ on every prefix that can contribute, with $W$ fixed conditional on the reference. Then the UPB suffix-envelope selector with an infinite fallback satisfies~\eqref{eq:pac} for UPB miscoverage with
\begin{equation}
 \Delta_{\rm U}=(2W-1)\sqrt{\frac1N\log\frac1\delta}.
 \label{eq:upbdelta}
\end{equation}
\end{theorem}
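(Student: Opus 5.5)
The plan is to mirror the LPB argument, replacing the Bernstein-type product bound with an Azuma--Hoeffding bound for a bounded martingale. Fix a candidate $\tau$ and let $\ell_i(\tau)=\ind{T_i>\fhat_\tau(X_i)}$ denote the UPB loss, with population mean $\mu_U(\tau)$. The first step is to show that $V_i(\tau)$ in~\eqref{eq:aht} is conditionally unbiased for $\ell_i(\tau)$. Condition on $\calF_{k-1}$ and the complete trajectory of sample $i$ in group $k$. The telescoping sum $m_{i0}+\sum_t[m_{it}-m_{i,t-1}]$ equals $\ell_i(\tau)$, because $m_{it}$ is replaced by the true loss at resolution and frozen afterward. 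Each increment is reweighted by $R_{it}/\rho_{it}$, and $\rho_{it}$ is exactly the reach probability: it combines commitment $p_i^c$, the continuation coins, and the insured-depth branch. It follows that $\E[R_{it}/\rho_{it}\mid \text{trajectory},\calF_{k-1}]=1$. The pivotal rounding guarantees the correct marginal by~\eqref{eq:roundbudget}, even though the $\zeta_i$ are dependent within the group. Hence $\E[V_i(\tau)\mid\calF_{k-1}]=\mu_U(\tau)$, using i.i.d.\ trajectories and the fact that the group assignment and $p_i^c$ are $\calF_{k-1}$-measurable.

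The second step controls the range of the group sums. Set $G_k=\sum_{i\in I_k}(V_i(\tau)-\mu_U(\tau))$, so that $\sum_k G_k$ is a martingale with respect to $\calF_k$. Bounding $V_i$ requires the monotonicity hypothesis, and I expect this to be the crux. Write $V_i=m_{i0}+\sum_t w_t\Delta_t$ with $w_t=R_{it}/\rho_{it}$. On a reached path the weights $1/\rho_{it}$ are nondecreasing up to the stopping turn and then drop to $0$. Summation by parts gives $V_i=\sum_t (w_t-w_{t+1})m_{it}+(1-w_1)m_{i0}$ in convex-like form. This expresses $V_i$ as a combination of values in $[0,1]$ whose coefficients $w_t-w_{t+1}$ are nonpositive except at the last reached turn, where the coefficient is at most $W$. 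The total positive mass is therefore at most $W$ and the total negative mass at most $W-1$, so $V_i\in[-(W-1),W]$, a range of width $2W-1$.

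Dependence within a group is the remaining difficulty, since Azuma needs the range of $G_k$ rather than that of each individual $V_i$. I would first try a per-sample martingale. Order the samples within each group and refine the filtration so that sample $i$ is revealed after the previous ones. This fails directly, because the pivotal coins couple the samples. The alternative is to avoid refining and apply Azuma at group level with range $m_k(2W-1)$. Then $\sum_k m_k^2\le mN$, which loses a factor $\sqrt m$. To recover the stated rate $(2W-1)\sqrt{\log(1/\delta)/N}$, I would instead use the exchangeable/negative-association structure of pivotal rounding: the $\zeta_i$ are negatively associated, and Hoeffding's inequality still holds for such sums. This structure lets the per-sample ranges combine as in the independent case, giving $\Prob\{\frac1N\sum_i V_i(\tau)\le \mu_U(\tau)-\Delta_U\}\le\delta$ with $\Delta_U$ as in~\eqref{eq:upbdelta}. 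The constant works out as range$^2/2$ in the exponent, i.e.\ $\exp(-2N\Delta^2/(2W-1)^2)$, which is at most $\delta$ for this $\Delta_U$.

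Finally, the selection step follows the LPB fixed-sequence argument, with the order reversed. Since UPB error decreases in $\tau$, the suffix-envelope selector picks the smallest $\tau$ such that every $u\ge\tau$ has estimated error at most $\alpha$, falling back to the infinite bound, whose error is $0$. Let $\tau^\dagger$ be the smallest candidate with $\mu_U(u)\le\alpha+\Delta_U$ for all $u\ge\tau^\dagger$, and let $\tau^-$ be the candidate just below it, which satisfies $\mu_U(\tau^-)>\alpha+\Delta_U$. A bad event can only occur if the envelope accepts $\tau^-$, which requires $\widehat\mu_U(\tau^-)\le\alpha<\mu_U(\tau^-)-\Delta_U$. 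The single-candidate bound above gives probability at most $\delta$, with no union bound needed because $\tau^-$ is deterministic given the reference. This yields~\eqref{eq:pac} for UPB miscoverage.
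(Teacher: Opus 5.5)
Your unbiasedness step, the summation-by-parts range $[1-W,W]$, and the reversed-order selection argument all agree with the paper. The gap is in the step that is supposed to deliver the rate. There you assert that the pivotal indicators are negatively associated, and that this lets the per-sample ranges ``combine as in the independent case.'' For weighted pivotal rounding, the paper establishes only the two cylinder inequalities of Lemma~\ref{lem:pivot}, so negative association would itself need a proof. More importantly, it would not help. Conditional on the sample-local variables $U_i$ (trajectory, depth draw, counterfactual continuation stream), you have $V_i=m_{i0}(U_i)+\zeta_ih_i(U_i)$, and $h_i$ can have either sign because the prediction increments are signed. Negative association survives only under functions that are monotone in a common direction. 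Here samples with $h_i>0$ increase in their indicator and samples with $h_i<0$ decrease, so negative dependence of the $\zeta_i$ becomes positive dependence between the positive and negative parts of the group sum. Concretely, suppose exactly one of two items is selected, each with probability $1/2$, with zero baselines and contributions $\zeta_1h$ and $-\zeta_2h$. The group sum is then $\pm h$, with moment generating function $\cosh(sh)$. This exceeds the independent product $(1+\cosh(sh))/2$, and the variance is twice the independent one. So the comparison with the independent Hoeffding bound fails for signed updates, and the exponent $2N\Delta^2/(2W-1)^2$ you claim is not established.

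The missing idea is to decouple the two sign classes and pay a factor of two; this is the paper's Lemma~\ref{lem:signedmgf}. Condition on the $U_i$ and split the group into $J_+=\{i:h_i\ge0\}$ and $J_-=\{i:h_i<0\}$. Cauchy--Schwarz bounds $\E_\zeta e^{-s\sum_i\zeta_ih_i}$ by the geometric mean of the corresponding $J_+$ and $J_-$ expectations at parameter $2s$. The $J_+$ factor is controlled by the down-cylinder inequality through the expansion~\eqref{eq:product}. The $J_-$ factor is controlled by the up-cylinder inequality, since there $e^{-2s\zeta_ih_i}=1+(e^{-2sh_i}-1)\zeta_i$ has a nonnegative coefficient. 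Integrating over the conditionally independent $U_i$, with Jensen's inequality for the square root, gives $\E[e^{-s\sum_{i\in I_k}V_i}\mid\calF_{k-1}]\le\prod_{i\in I_k}(\E[e^{-2sV_i}\mid\calF_{k-1}])^{1/2}$. After centering at $\mu$, Hoeffding's lemma at parameter $2s$ with range $a=2W-1$ bounds each factor by $e^{s^2a^2/4}$. Iterating over the ledger filtration and taking $s=2x/a^2$ then yields $\exp\{-Nx^2/a^2\}$, which is half the independent exponent. That is exactly the rate to which~\eqref{eq:upbdelta} is calibrated. Your own computation returning $\delta^2$ rather than $\delta$ is the signal that the stated constant already absorbs this factor-of-two loss.
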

Acquisition enters this correction only through $W$. Summation by parts bounds each sample contribution in $[1-W,W]$, and conditioning on sample-local counterfactual continuations makes it affine in the commitment indicator. We apply the two pivotal cylinder inequalities to positive and negative coefficients separately, then use Cauchy--Schwarz to compare exponential moments with independent sample marginals at a factor-of-two cost. Combining these conditional bounds across groups handles reflow. Appendix~\ref{app:signedmgf} proves the result and a variance-sensitive version. Small propensities can still make the range-only bound vacuous.

\section{Experiments}
\label{sec:experiments}
We evaluate \HARP using the same datasets and setup as in~\citep{feldman2026spend}: toxicity, red teaming with Qwen and Llama Guard judges, retrieval-grounded hallucination, and AutoIF instruction following. Each setup uses Qwen 2.5 14B, Llama 3.1 8B, Phi-4 Mini, and Gemma 3 4B as target models. We use a survival model pre-trained on 4000 independent samples to predict event time through $M=200$. We remark that the budget constraint is for the calibration step only; the training phase has a separate budget.

\paragraph{Protocol and budget choice.}
We fix one target budget per dataset and task. For LPBs, we follow the configuration of~\citet{feldman2026spend} and set target $\bar B=20$ budget units per sample for the toxicity data and $\bar B=10$ for other datasets. For constructing UPBs and estimating metrics, we set $\bar B=20$ throughout.
Appendix~\ref{app:experiments} provides the full evaluation protocol.

Each experiment is repeated twenty times for random splits of calibration (2000) and test samples (1000). 
\HARP uses 100 reference prompts for which no budget is consumed, and the budget is allocated over the remaining 1,900. For \HARP, we set $m=25$, $\eta=.05$, and $p_{\min}=.02$. For \DAPRO, we use 25 data points for learning an optimized allocation policy and a different 25 samples for the CRC calibration, as in~\citep{feldman2026spend}. We compare \HARP-C and \HARP-D with and without depth, then choose one variant per task using coverage error and bound size for predictive bounds, and normalized estimation MSE for metrics (Appendix~\ref{app:selection}). The resulting choices are \lpbvariant\ for LPBs, \upbvariant\ for UPBs, and \metricvariant\ for metrics, applied to every dataset and target. Full comparison of the four variants is given in Appendix~\ref{app:ablation}.

\begin{figure}[t]\centering
\paperfigure{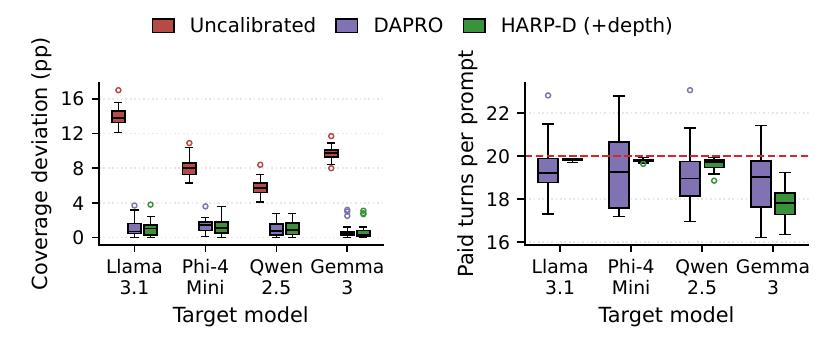}{2.3}
\caption{\textbf{Toxicity LPBs.} Coverage comparison: absolute coverage deviation and paid turns per parent prompt, all four target models, 90\% coverage, $\bar B=20$ turns per sample, twenty runs. \lpbchoice\ Uncalibrated uses raw quantiles and has no cost box. The dashed red line marks the budget.}
\label{fig:toxicity}\end{figure}

\paragraph{Predictive bounds.}
On toxicity (Figure~\ref{fig:toxicity}), \DAPRO\ and \HARP\ attain mean absolute coverage deviations of 1.08 and 1.05 percentage points (pp), averaged across four targets. For Qwen 2.5 on AutoIF (Figure~\ref{fig:autoif}), \DAPRO\ has smaller deviations: 1.025 versus 1.245 pp for LPBs and 1.160 versus 1.225 pp for UPBs. We remark that \HARP cannot obtain performance substantially better than \DAPRO for LPB construction tasks, as the results in~\citep{feldman2026spend} show that \DAPRO performs almost as well as an oracle alternative that has infinite budget on these tasks.

\begin{figure}[t]\centering
\paperfigure{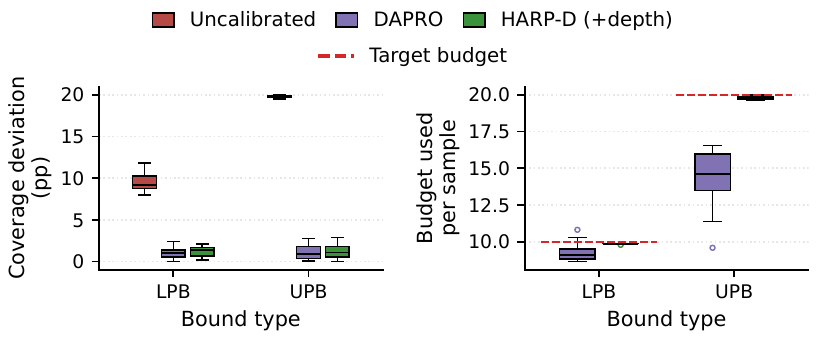}{2.4}
\caption{\textbf{AutoIF LPBs and UPBs, Qwen 2.5 14B.} Coverage deviation (left) and budget used per sample (right), over twenty runs. Target coverage: 90\% for LPB and 80\% for UPB; target budgets: 10 and 20 turns per sample, respectively (red dashed marks). \HARP\ uses D with depth for both bounds, selected by the criterion in Appendix~\ref{app:selection}. Uncalibrated has no acquisition-cost box.}
\label{fig:autoif}\end{figure}

\paragraph{Population metrics.}
We estimate the marginal event rate $\Prob(T \leq M)$ for the red teaming data with Llama Guard as a judge and report the performance of the methods in Figure~\ref{fig:red_team_event_rate}. While both calibration methods recover the correct metric in expectation, \HARP achieves 60.65\% lower variance across the four target models.

\begin{figure}[t]\centering
\paperfigure{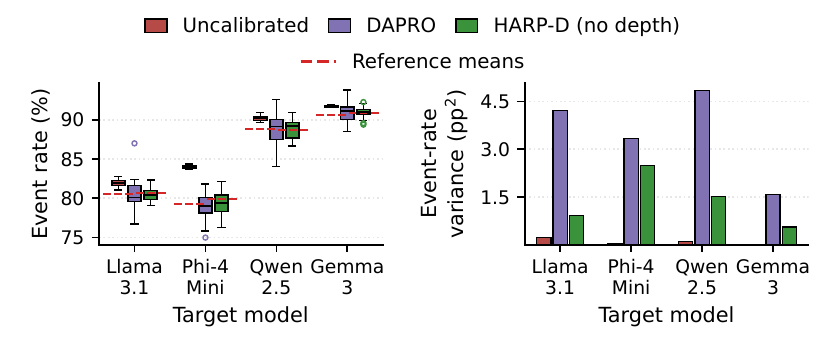}{2.3}
\caption{\textbf{Red-team event rates, Llama Guard judge.} Estimates and across-seed variances, all four targets, $\bar B=20$ turns per sample, twenty runs. \metricchoice\ Red marks denote benchmark-cohort means. Acquired methods use the same estimator.}
\label{fig:red_team_event_rate}\end{figure}

\paragraph{Budget.}

The mean budget utilized, i.e., consumed budget divided by available budget, by \HARP for LPB, UPB, and metrics is 94.51\%, 95.70\%, and 97.03\% compared to 91.58\%, 69.14\%, and 82.80\% of \DAPRO.
\HARP never exceeds the target budget across all experiments, while \DAPRO's overrun frequencies are 13.00\%, 3.00\%, and 12.75\%, respectively. This is not a surprise, as Theorem~\ref{thm:budget} guarantees a deterministic budget control of \HARP, while \DAPRO provides only an expected budget guarantee.
\FloatBarrier
\section{Discussion}
\label{sec:discussion}
We introduced \HARP for multi-turn allocation and calibration under hard budget constraints. 
Unlike \DAPRO~\citep{feldman2026spend}, \HARP does not require a policy-learning phase or a CRC budget-calibration step. Instead, it couples the censoring times of different samples. Despite this dependence, we show that the predictive bounds and estimated metrics of \HARP satisfy finite-sample guarantees. Our experiments support our theoretical claims and show that \HARP utilizes the budget effectively and yields estimates that have a low variance.

We remark that our guarantees require that the calibration and test data are i.i.d. with correct audit labels; in practice, labels may be noisy or partially missing, motivating extensions of \HARP that are robust to such corruptions~\citep{einbinder2024label, penso2024conformal, feldman2024robustconformalpredictionusing, feldman2026conformal}. Furthermore, \HARP constructs bounds for a single event time. In many applications, it is necessary to jointly evaluate the safety of modern AI systems such as agents, which calls for developing bounds for multiple event times~\citep{feldman2023calibrated}. Additionally, under smaller budgets, many event times may be censored. Model-generated or synthetic outcomes could complement censored events~\citep{angelopoulos2023prediction, angelopoulos2023ppi++, bashari2026general, bashari2026synthetic}. Finally, \HARP requires a minimal total budget $B\ge K(1+\eta)M$.
 
Based on our experiments, we recommend \HARP-D with depth for constructing predictive bounds and \HARP-D without depth for estimating metrics. Depth reduces the median LPB coverage variance but increases the variance of the metric estimates (see Appendix~\ref{app:ablation}).

\section*{Acknowledgments}
Y.~R. and S.~F. were supported by the European Union (ERC, SafetyBounds, 101163414). Views and opinions expressed are however those of the authors only and do not necessarily reflect those of the European Union or the European Research Council Executive Agency. Neither the European Union nor the granting authority can be held responsible for them.
Y.R. thanks the Career Advancement Fellowship, Technion.

\label{main-text-end}
\clearpage
\bibliographystyle{unsrtnat}
\bibliography{references}

\begin{thebibliography}{31}
\providecommand{\natexlab}[1]{#1}
\providecommand{\url}[1]{\texttt{#1}}
\expandafter\ifx\csname urlstyle\endcsname\relax
  \providecommand{\doi}[1]{doi: #1}\else
  \providecommand{\doi}{doi: \begingroup \urlstyle{rm}\Url}\fi

\bibitem[Gui et~al.(2024)Gui, Hore, Ren, and Barber]{gui2024conformalized}
Yu~Gui, Rohan Hore, Zhimei Ren, and Rina~Foygel Barber.
\newblock Conformalized survival analysis with adaptive cut-offs.
\newblock \emph{Biometrika}, 111\penalty0 (2):\penalty0 459--477, 2024.
\newblock \doi{10.1093/biomet/asad076}.

\bibitem[Davidov et~al.(2025)Davidov, Feldman, Shamai, Kimmel, and Romano]{davidovconformalized}
Hen Davidov, Shai Feldman, Gil Shamai, Ron Kimmel, and Yaniv Romano.
\newblock {Conformalized Survival Analysis for General Right-Censored Data}.
\newblock In \emph{International Conference on Learning Representations}, 2025.
\newblock URL \url{https://openreview.net/forum?id=JQtuCumAFD}.

\bibitem[Davidov et~al.(2026)Davidov, Feldman, Freidkin, and Romano]{davidov2026calibrated}
Hen Davidov, Shai Feldman, Gilad Freidkin, and Yaniv Romano.
\newblock Calibrated predictive lower bounds on time-to-unsafe-sampling in {LLM}s.
\newblock In \emph{The 29th International Conference on Artificial Intelligence and Statistics}, 2026.

\bibitem[Feldman and Romano(2026)]{feldman2026spend}
Shai Feldman and Yaniv Romano.
\newblock How many iterations to jailbreak? dynamic budget allocation for multi-turn llm evaluation.
\newblock \emph{arXiv preprint arXiv:2605.06605}, 2026.

\bibitem[Yamin et~al.(2025)Yamin, Kennedy, and Wilder]{pmlr-v286-yamin25a}
Khurram Yamin, Edward Kennedy, and Bryan Wilder.
\newblock Dependent randomized rounding for budget constrained experimental design.
\newblock In Silvia Chiappa and Sara Magliacane, editors, \emph{Proceedings of the Forty-first Conference on Uncertainty in Artificial Intelligence}, volume 286 of \emph{Proceedings of Machine Learning Research}, pages 4681--4700. PMLR, 21--25 Jul 2025.

\bibitem[Madow and Madow(1944)]{madow1944theory}
William~G Madow and Lillian~H Madow.
\newblock On the theory of systematic sampling, i.
\newblock \emph{The Annals of Mathematical Statistics}, 15\penalty0 (1):\penalty0 1--24, 1944.

\bibitem[Deville and Till{\'e}(2004)]{deville2004efficient}
Jean-Claude Deville and Yves Till{\'e}.
\newblock Efficient balanced sampling: the cube method.
\newblock \emph{Biometrika}, 91\penalty0 (4):\penalty0 893--912, 2004.

\bibitem[Gandhi et~al.(2006)Gandhi, Khuller, Parthasarathy, and Srinivasan]{gandhi2006dependent}
Rajiv Gandhi, Samir Khuller, Srinivasan Parthasarathy, and Aravind Srinivasan.
\newblock Dependent rounding and its applications to approximation algorithms.
\newblock \emph{Journal of the ACM (JACM)}, 53\penalty0 (3):\penalty0 324--360, 2006.

\bibitem[Koenker and Bassett(1978)]{QR}
Roger Koenker and Gilbert Bassett.
\newblock Regression quantiles.
\newblock \emph{Econometrica}, 46\penalty0 (1):\penalty0 33--50, 1978.
\newblock ISSN 00129682, 14680262.
\newblock URL \url{http://www.jstor.org/stable/1913643}.

\bibitem[Chung et~al.(2020)Chung, Neiswanger, Char, and Schneider]{beyond_pinball_loss}
Youngseog Chung, Willie Neiswanger, Ian Char, and Jeff Schneider.
\newblock Beyond pinball loss: Quantile methods for calibrated uncertainty quantification.
\newblock \emph{arXiv preprint arXiv:2011.09588}, 2020.

\bibitem[Feldman et~al.(2021)Feldman, Bates, and Romano]{feldman2021improving}
Shai Feldman, Stephen Bates, and Yaniv Romano.
\newblock Improving conditional coverage via orthogonal quantile regression.
\newblock \emph{Advances in neural information processing systems}, 34:\penalty0 2060--2071, 2021.

\bibitem[Vovk et~al.(2005)Vovk, Gammerman, and Shafer]{vovk2005algorithmic}
Vladimir Vovk, Alexander Gammerman, and Glenn Shafer.
\newblock \emph{{Algorithmic Learning in a Random World}}, volume~29.
\newblock Springer, 2005.
\newblock \doi{10.1007/b106715}.

\bibitem[Horvitz and Thompson(1952)]{Horvitz01121952}
Daniel~G. Horvitz and Donovan~J. Thompson.
\newblock A generalization of sampling without replacement from a finite universe.
\newblock \emph{Journal of the American Statistical Association}, 47\penalty0 (260):\penalty0 663--685, 1952.

\bibitem[Angelopoulos et~al.(2024)Angelopoulos, Bates, Fisch, Lei, and Schuster]{angelopoulos2024conformal}
Anastasios~Nikolas Angelopoulos, Stephen Bates, Adam Fisch, Lihua Lei, and Tal Schuster.
\newblock Conformal risk control.
\newblock In \emph{The Twelfth International Conference on Learning Representations}, 2024.
\newblock URL \url{https://openreview.net/forum?id=33XGfHLtZg}.

\bibitem[Byrka et~al.(2017)Byrka, Pensyl, Rybicki, Srinivasan, and Trinh]{byrka2016improved}
Jaros{\l}aw Byrka, Thomas Pensyl, Bartosz Rybicki, Aravind Srinivasan, and Khoa Trinh.
\newblock An improved approximation for k-median and positive correlation in budgeted optimization.
\newblock \emph{ACM Transactions on Algorithms (TALG)}, 13\penalty0 (2):\penalty0 1--31, 2017.

\bibitem[Einbinder et~al.(2024)Einbinder, Feldman, Bates, Angelopoulos, Gendler, and Romano]{einbinder2024label}
Bat-Sheva Einbinder, Shai Feldman, Stephen Bates, Anastasios~N Angelopoulos, Asaf Gendler, and Yaniv Romano.
\newblock Label noise robustness of conformal prediction.
\newblock \emph{Journal of Machine Learning Research}, 25\penalty0 (328):\penalty0 1--66, 2024.

\bibitem[Penso and Goldberger(2024)]{penso2024conformal}
Coby Penso and Jacob Goldberger.
\newblock A conformal prediction score that is robust to label noise.
\newblock \emph{arXiv preprint arXiv:2405.02648}, 2024.

\bibitem[Feldman and Romano(2024)]{feldman2024robustconformalpredictionusing}
Shai Feldman and Yaniv Romano.
\newblock Robust conformal prediction using privileged information.
\newblock \emph{Advances in Neural Information Processing Systems}, 37:\penalty0 117813--117852, 2024.

\bibitem[Feldman et~al.(2026)Feldman, Bates, and Romano]{feldman2026conformal}
Shai Feldman, Stephen Bates, and Yaniv Romano.
\newblock Conformal prediction with corrupted labels: Uncertain imputation and robust re-weighting.
\newblock In \emph{International Conference on Learning Representations}, volume 2026, pages 30901--30942, 2026.

\bibitem[Feldman et~al.(2023)Feldman, Bates, and Romano]{feldman2023calibrated}
Shai Feldman, Stephen Bates, and Yaniv Romano.
\newblock Calibrated multiple-output quantile regression with representation learning.
\newblock \emph{Journal of Machine Learning Research}, 24\penalty0 (24):\penalty0 1--48, 2023.

\bibitem[Angelopoulos et~al.(2023{\natexlab{a}})Angelopoulos, Bates, Fannjiang, Jordan, and Zrnic]{angelopoulos2023prediction}
Anastasios~N Angelopoulos, Stephen Bates, Clara Fannjiang, Michael~I Jordan, and Tijana Zrnic.
\newblock Prediction-powered inference.
\newblock \emph{Science}, 382\penalty0 (6671):\penalty0 669--674, 2023{\natexlab{a}}.

\bibitem[Angelopoulos et~al.(2023{\natexlab{b}})Angelopoulos, Duchi, and Zrnic]{angelopoulos2023ppi++}
Anastasios~N Angelopoulos, John~C Duchi, and Tijana Zrnic.
\newblock Ppi++: Efficient prediction-powered inference.
\newblock \emph{arXiv preprint arXiv:2311.01453}, 2023{\natexlab{b}}.

\bibitem[Bashari et~al.(2026{\natexlab{a}})Bashari, Lee, Lotan, Dobriban, and Romano]{bashari2026general}
Meshi Bashari, Yonghoon Lee, Roy~Maor Lotan, Edgar Dobriban, and Yaniv Romano.
\newblock General synthetic-powered inference.
\newblock In \emph{Forty-third International Conference on Machine Learning}, 2026{\natexlab{a}}.
\newblock URL \url{https://openreview.net/forum?id=sxLncu2Fhx}.

\bibitem[Bashari et~al.(2026{\natexlab{b}})Bashari, Lotan, Lee, Dobriban, and Romano]{bashari2026synthetic}
Meshi Bashari, Roy~Maor Lotan, Yonghoon Lee, Edgar Dobriban, and Yaniv Romano.
\newblock Synthetic-powered predictive inference.
\newblock \emph{Advances in Neural Information Processing Systems}, 38:\penalty0 117704--117764, 2026{\natexlab{b}}.

\bibitem[Gehman et~al.(2020)Gehman, Gururangan, Sap, Choi, and Smith]{gehman2020realtoxicityprompts}
Samuel Gehman, Suchin Gururangan, Maarten Sap, Yejin Choi, and Noah~A Smith.
\newblock {RealToxicityPrompts: Evaluating Neural Toxic Degeneration in Language Models}.
\newblock In \emph{Findings of the Association for Computational Linguistics: EMNLP}, pages 3356--3369, 2020.

\bibitem[Ganguli et~al.(2022)Ganguli, Lovitt, Kernion, Askell, Bai, Kadavath, Mann, Perez, Schiefer, Ndousse, et~al.]{ganguli2022red}
Deep Ganguli, Liane Lovitt, Jackson Kernion, Amanda Askell, Yuntao Bai, Saurav Kadavath, Ben Mann, Ethan Perez, Nicholas Schiefer, Kamal Ndousse, et~al.
\newblock Red teaming language models to reduce harms: Methods, scaling behaviors, and lessons learned.
\newblock \emph{arXiv preprint arXiv:2209.07858}, 2022.

\bibitem[Inan et~al.(2023)Inan, Upasani, Chi, Rungta, Iyer, Mao, Tontchev, Hu, Fuller, Testuggine, et~al.]{inan2023llama}
Hakan Inan, Kartikeya Upasani, Jianfeng Chi, Rashi Rungta, Krithika Iyer, Yuning Mao, Michael Tontchev, Qing Hu, Brian Fuller, Davide Testuggine, et~al.
\newblock Llama guard: Llm-based input-output safeguard for human-ai conversations.
\newblock \emph{arXiv preprint arXiv:2312.06674}, 2023.

\bibitem[Rajpurkar et~al.(2018)Rajpurkar, Jia, and Liang]{rajpurkar-etal-2018-know}
Pranav Rajpurkar, Robin Jia, and Percy Liang.
\newblock Know what you don{'}t know: Unanswerable questions for {SQ}u{AD}.
\newblock In Iryna Gurevych and Yusuke Miyao, editors, \emph{Proceedings of the 56th Annual Meeting of the Association for Computational Linguistics (Volume 2: Short Papers)}, pages 784--789, Melbourne, Australia, July 2018. Association for Computational Linguistics.

\bibitem[Dong et~al.(2025)Dong, Lu, Li, Xia, Yu, Zhou, and Zhou]{dong2025selfplay}
Guanting Dong, Keming Lu, Chengpeng Li, Tingyu Xia, Bowen Yu, Chang Zhou, and Jingren Zhou.
\newblock Self-play with execution feedback: Improving instruction-following capabilities of large language models.
\newblock In \emph{The Thirteenth International Conference on Learning Representations}, 2025.

\bibitem[Chao et~al.(2025)Chao, Robey, Dobriban, Hassani, Pappas, and Wong]{chao2025jailbreaking}
Patrick Chao, Alexander Robey, Edgar Dobriban, Hamed Hassani, George~J Pappas, and Eric Wong.
\newblock Jailbreaking black box large language models in twenty queries.
\newblock In \emph{2025 IEEE Conference on Secure and Trustworthy Machine Learning (SaTML)}, pages 23--42. IEEE, 2025.

\bibitem[Wang et~al.(2024)Wang, Yang, Huang, Yang, Majumder, and Wei]{wang2024multilingual}
Liang Wang, Nan Yang, Xiaolong Huang, Linjun Yang, Rangan Majumder, and Furu Wei.
\newblock Multilingual e5 text embeddings: A technical report.
\newblock \emph{arXiv preprint arXiv:2402.05672}, 2024.

\end{thebibliography}
\clearpage
\appendix
\section{Additional Related Work}
\label{app:related}
Inverse-probability estimation for unequal-probability sampling is classical~\citep{Horvitz01121952}. Calibrated event-time prediction under censoring also predates this work: for example, \citet{gui2024conformalized} develop adaptive cut-offs for conformalized survival analysis. These results motivate controlling the resolution event and candidate family. Model-assisted correction has a substantial history as well, with prediction-powered inference providing a recent general framework for combining predictions and observed outcomes \citep{angelopoulos2023prediction}. We use the sequential augmented estimator from~\citep{feldman2026spend}.

\citet{pmlr-v286-yamin25a} study experimental assignment with a fixed treatment count. Their dependent-rounding construction preserves first-order probabilities and supports unbiasedness, variance comparisons under specified conditions, and asymptotic inference. The distinction from our setting is therefore the constraint and subsequent inference problem. \HARP\ reserves heterogeneous maximum trajectory lengths, incurs random event-stopped costs, updates later policies from released resources, and calibrates a selected nested family of predictive bounds. We account for these features through the reservation invariant and predictable-group analysis below.

The weighted pairwise rounding properties we use are established in the budgeted-rounding literature, including \citet{byrka2016improved}. We provide a self-contained proof to specify the properties needed for calibration. In particular, the proof uses cylinder inequalities for products of selected or unselected coordinates. Our analysis applies these inequalities within a sequential ledger, together with sample-local continuation and subsequent predictive calibration.

\DAPRO~\citep{feldman2026spend} is our direct baseline. Its history-dependent score, inverse-probability correction, and augmented UPB construction remain relevant to our setting. \HARP\ additionally connects worst-case reservations to realized costs and justifies inference under the resulting dependence. 

\section{Complete Budget and LPB Proofs}
\subsection{Weighted pivotal rounding}
\label{app:pivot}
Fix a group, positive costs $d_i\le M$ and starting probabilities $y_i=p_i^c$. Coordinates initially at zero or one need no update. At each step, choose two fractional coordinates $i,j$ and set
\begin{align}
 a_+&=\min\{d_i(1-y_i),d_j y_j\},&
 a_-&=\min\{d_i y_i,d_j(1-y_j)\}.\label{eq:pivotsteps}
\end{align}
Draw $s=a_+$ with probability $a_-/(a_++a_-)$ and $s=-a_-$ otherwise, and replace $(y_i,y_j)$ by $(y_i+s/d_i,y_j-s/d_j)$. At least one coordinate becomes integral at each step. If one fractional coordinate remains, round it by a Bernoulli draw with its current value as success probability. Denote the resulting vector by $\zeta$.

\begin{lemma}[Properties of the weighted pivotal rounding]\label{lem:pivot}
Conditional on the input group, weighted pivotal rounding satisfies
\begin{align}
 \E\zeta_i&=p_i^c,&
 \sum_i d_i\zeta_i&\le\sum_i d_i p_i^c+\max_i d_i,\label{eq:pivotproperties}\\
 \E\prod_{i\in J}\zeta_i&\le\prod_{i\in J}p_i^c,&
 \E\prod_{i\in J}(1-\zeta_i)&\le\prod_{i\in J}(1-p_i^c)
 \quad\text{for every }J.\label{eq:cylinders}
\end{align}
In particular, for fixed $h_i\in[0,1]$,
\begin{equation}
 \E\prod_i(1-h_i\zeta_i)\le\prod_i(1-h_i p_i^c).\label{eq:product}
\end{equation}
\end{lemma}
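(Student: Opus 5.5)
The plan is to treat the rounding as a discrete-time process on the vector $y$ and prove each property step by step. \emph{Step 1 (martingale and budget).} In a single pivot, the increment $s$ has mean $a_+\cdot a_-/(a_++a_-)-a_-\cdot a_+/(a_++a_-)=0$. Hence $\E[y_i^{\rm new}\mid y]=y_i$ for every coordinate. So $y$ is a bounded martingale. After at most $m_k$ pivots and the final Bernoulli draw, the optional stopping identity gives $\E\zeta_i=p_i^c$. The weighted sum $d_iy_i+d_jy_j$ is unchanged by construction, since $d_i(s/d_i)-d_j(s/d_j)=0$. Every pivot therefore preserves $\sum_i d_iy_i=\sum_i d_ip_i^c$ exactly. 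The choice of $a_\pm$ keeps both coordinates in $[0,1]$ and makes at least one of them integral, so termination is guaranteed. The last Bernoulli rounding changes a single coordinate by at most $1$, which moves the sum by at most $d_\ell\le\max_i d_i$. This gives the budget inequality in \eqref{eq:pivotproperties}.

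\emph{Step 2 (cylinder inequalities).} I will show that $\Phi_J(y)=\prod_{i\in J}y_i$ is a supermartingale along the process. A pivot changes at most the coordinates $i,j$. If neither lies in $J$, $\Phi_J$ is unchanged. If exactly one lies in $J$, $\Phi_J$ is linear in that coordinate, and the martingale property gives equality in conditional expectation. If both lie in $J$, then
\[
\E[(y_i+s/d_i)(y_j-s/d_j)]=y_iy_j-\E[s^2]/(d_id_j)\le y_iy_j,
\]
because the cross terms vanish by $\E s=0$. The final Bernoulli step affects one coordinate linearly, so it also preserves the expectation. Iterating the tower property yields $\E\prod_{J}\zeta_i\le\prod_J p_i^c$. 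For the complementary cylinder I apply the same argument to $1-y$. The pivot acts on $(1-y_i,1-y_j)$ as $(1-y_i-s/d_i,\,1-y_j+s/d_j)$, which is again a mean-zero move with opposite signs, and the same computation gives $\E\prod_J(1-\zeta_i)\le\prod_J(1-p_i^c)$.

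\emph{Step 3 (product bound \eqref{eq:product}).} This is the step I expect to be the main obstacle, since $h_i\in[0,1]$ mixes the two cylinder types. I first try the direct supermartingale route with $\Psi(y)=\prod_i(1-h_iy_i)$. Along a pivot on $(i,j)$, the factor
\[
(1-h_iy_i-h_is/d_i)(1-h_jy_j+h_js/d_j)
\]
has expectation $(1-h_iy_i)(1-h_jy_j)-h_ih_j\E s^2/(d_id_j)$, again after the cross terms cancel. The correction term is nonpositive because $h_ih_j\ge0$. The untouched factors are nonnegative and fixed. So $\Psi$ is a supermartingale, and the final Bernoulli step is linear. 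The tower property then gives \eqref{eq:product} directly, so no expansion into cylinders is needed. As a fallback, if a more general statement were needed, I would expand $\prod_i(1-h_i\zeta_i)$ as a mixture over independent Bernoulli$(h_i)$ masks $J$, namely $\prod_{i\in J}(1-\zeta_i)$ averaged over $J$. I would then apply the second cylinder inequality termwise, conditioning on the masks since they are independent of $\zeta$, and recombine to $\prod_i(1-h_ip_i^c)$.
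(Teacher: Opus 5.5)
Your proposal is correct. Steps~1 and~2 match the paper's proof. In both, zero-mean pivot increments make each coordinate a martingale and preserve the weighted sum, and the final Bernoulli draw moves that sum by at most $\max_i d_i$. The pair product then picks up the drift $-\E[s^2]/(d_id_j)\le0$ for both $\prod_{i\in J}y_i$ and $\prod_{i\in J}(1-y_i)$.

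For \eqref{eq:product}, the paper uses what you list as your fallback. It writes $1-h_i\zeta_i=(1-h_i)+h_i(1-\zeta_i)$ and expands the product into a nonnegative combination of down-cylinders, with the constant term being the empty cylinder. It then applies the second inequality of \eqref{eq:cylinders} term by term. So your worry that $h_i\in[0,1]$ mixes the two cylinder types is unfounded: only down-cylinders appear.

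Your primary route instead shows directly that $\prod_i(1-h_iy_i)$ is a supermartingale. This is valid, since the untouched factors are nonnegative and the pair drift is $-h_ih_j\E[s^2]/(d_id_j)\le0$.

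The trade-off is as follows. Your computation is self-contained, and it actually unifies all three inequalities. It shows that any product $\prod_i(a_i+b_iy_i)$ of factors nonnegative on $[0,1]$, whose slopes $b_i$ share a common sign, is a supermartingale under the pivot. The up- and down-cylinders are the cases $b_i\in\{0,1\}$ and $b_i\in\{0,-1\}$. The same fact also covers the factors $1+(e^{-2sh_i}-1)\zeta_i$ used later in Lemma~\ref{lem:signedmgf}.

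However, your argument relies on the specific pairwise martingale dynamics of this rounding. The paper's expansion touches the rounding only through the cylinder inequalities. Hence \eqref{eq:product} would transfer verbatim to any dependent-rounding scheme known to have negative cylinder dependence. It also keeps the cylinder inequalities as the single interface that the later concentration arguments cite.
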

\begin{proof}
Each pairwise step has $\E[s\mid y]=0$, so every coordinate is a martingale and the weighted sum is unchanged. The final Bernoulli step changes the weighted sum by at most $\max_i d_i\le M$. This proves~\eqref{eq:pivotproperties}.

For a fixed subset $J$, consider $\prod_{i\in J}y_i$. If neither updated coordinate is in $J$, the product does not change. If exactly one is in $J$, its conditional expectation is unchanged. If both are in $J$, their product changes by a linear term in $s$ minus $s^2/(d_i d_j)$. Multiplication by the other nonnegative coordinates preserves the nonpositive conditional drift. The same calculation for $(1-y_i)(1-y_j)$ gives a nonpositive quadratic term. Both subset products are therefore supermartingales until the last fractional coordinate. That final step is linear in its single coordinate and preserves their conditional means. This proves~\eqref{eq:cylinders} without requiring any independence of the output coordinates.

Finally, write $1-h_i\zeta_i=(1-h_i)+h_i(1-\zeta_i)$ and expand the product. Every coefficient in this expansion is nonnegative. Applying the second inequality in~\eqref{eq:cylinders} term by term proves~\eqref{eq:product}. Zero-cost samples can be set to one outside the pivot operation; they do not affect the cost bound or these inequalities.
\end{proof}

\subsection{The pathwise ledger invariant}
\label{app:budget}
\begin{proof}[Proof of Theorem~\ref{thm:budget}]
At the beginning of the first group, feasibility gives $L_0\ge K(1+\eta)M$. Suppose that before group $k$,
\begin{equation}
 L_{k-1}\ge(K-k+1)(1+\eta)M.\label{eq:invariant}
\end{equation}
Then $\kappa_k\ge(1+\eta)M$, and in particular $\kappa_k-M\ge\eta M>0$. Summing the cap in~\eqref{eq:commit} over the positive-cost samples gives
\[
 \sum_{i\in I_k}d_i p_i^c\le
 \frac{\#\{i\in I_k:d_i>0\}}{m_k}(\kappa_k-M)\le\kappa_k-M.
\]
Lemma~\ref{lem:pivot} implies that the selected horizons cost at most $\kappa_k$. Each selected sample executes no more than $d_i$ turns. Event stopping and D's continuation stopping can only lower that cost. Thus $S_k\le\kappa_k$ and
\[
 L_k=L_{k-1}-S_k\ge L_{k-1}-\kappa_k=(K-k)(1+\eta)M.
\]
This is the invariant for the next group. At $k=K$ it gives $L_K\ge0$, proving $\sum_k S_k=B-L_K\le B$. All inequalities are pointwise. Randomized depth is covered by conditioning on its chosen horizons, all of which remain in $[0,M]$.
\end{proof}

\begin{lemma}[Positive reach]\label{lem:positive}
Under the same conditions, $p_i^c\ge p_0=\min\{p_{\min},\eta/m\}$ for every positive-cost sample.
\end{lemma}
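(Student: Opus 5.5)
We need to show that $p_i^c=\min\{p_i^{\rm prop},1,(\kappa_k-M)/(m_kd_i)\}\ge\min\{p_{\min},\eta/m\}$ for every positive-cost sample. Since the minimum of three quantities is at least $p_0$ as soon as each one is, we bound each term from below separately. The key ingredient is the ledger invariant~\eqref{eq:invariant} established in the proof of Theorem~\ref{thm:budget}. That invariant holds pathwise before every group under~\eqref{eq:feasible}, and this is what ``under the same conditions'' refers to.

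\emph{The proposal and the constant term.} The clipping in~\eqref{eq:commit} gives $p_i^{\rm prop}\ge p_{\min}\ge p_0$ directly. The clip is well defined because $p_{\min}\le 1$, and for $d_i>0$ the estimated cost $\widehat c_i$ is positive, or the ratio is $+\infty$, in which case the clip returns $1$. The middle term $1$ trivially exceeds $p_0$, since $p_0\le p_{\min}\le1$.

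\emph{The budget cap.} By~\eqref{eq:invariant}, $L_{k-1}\ge (K-k+1)(1+\eta)M$. Hence by~\eqref{eq:ledger}, $\kappa_k\ge(1+\eta)M$, and therefore $\kappa_k-M\ge\eta M$. Using $d_i\le M$ and $m_k\le m$, we get
\[
 \frac{\kappa_k-M}{m_kd_i}\ge\frac{\eta M}{m\,M}=\frac{\eta}{m}\ge p_0 .
\]
Combining the three bounds gives $p_i^c\ge p_0$.

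The only subtle point is that the invariant must hold on every realization, not only in expectation, because $\kappa_k$ depends on the realized spends $S_1,\dots,S_{k-1}$. This is exactly what the inductive pathwise argument in the proof of Theorem~\ref{thm:budget} provides, so the lower bound is almost sure. The same holds under randomized depth, because we condition on the realized horizons, all of which lie in $[0,M]$.
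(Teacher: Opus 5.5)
Your proof is correct and follows essentially the same route as the paper's: the pathwise ledger invariant gives $\kappa_k-M\ge\eta M$, so the cap is at least $\eta M/(m_kd_i)\ge\eta/m$; the clipped proposal is at least $p_{\min}$; and the minimum of these with one is therefore at least $p_0$. The paper's proof also adds side remarks that are not needed for this statement, namely that C adds no thinning and that D's at most $M$ continuation factors are each at least $p_{\min}$, which concern the later floor on $\pi_i$.
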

\begin{proof}
The cap before clipping at one is at least $\eta M/(m_kd_i)\ge\eta/m$. The proposal is at least $p_{\min}$, so their minimum with one is at least $p_0$. C introduces no further thinning. D has at most $M$ continuation factors, each at least $p_{\min}\le1$. Free samples have probability one.
\end{proof}

The specified continuation rule admits a stronger bound when $\omega<1$. Suppose $\lambda_k\le\Lambda<\infty$ for every group and the predicted remaining cost lies in $[0,M]$. The model probabilities give $\nu_i(t)\ge1-\omega$, hence $u_i(t)\ge u_0:=\min\{1,\sqrt{(1-\omega)/(\Lambda M)}\}$. Equation~\eqref{eq:continue} implies $a_i(t)\ge\min\{a_i(t-1),u_i(t)\}$, whether $u_i(t)$ is above or below the previous reach. Induction therefore yields
\begin{equation}
 \pi_i\ge\widetilde p_0:=\min\{p_0,u_0\}\quad\text{for the specified D policy}.\label{eq:policyfloor}
\end{equation}
The interpolation in Appendix~\ref{app:reference} remains within the reference multiplier grid, whose largest value therefore supplies $\Lambda$. The reported configuration uses $\Lambda=10^9$ and $\omega=1/2$. For $M=200$, this gives $\widetilde p_0\ge1.58\times10^{-6}$. Although considerably stronger than $p_0p_{\min}^M$, it remains too small for informative coarse uniform PAC corrections. For arbitrary continuations with only $P_i(t)\ge p_{\min}$, the product-of-floors bound still applies.

\subsection{A filtration that permits reflow}
\label{app:filtration}
Condition on the training data, fitted model, independent reference design, and independent group permutation. After this conditioning, write the iid full working trajectories as $W_1,\ldots,W_N$, with $X_i$ and $T_i$ measurable functions of $W_i$. Let $\calF_{k-1}$ contain those fixed design objects, all complete trajectories in previous groups, and all random coins used in those groups. The observed past is a sub-$\sigma$-field, so the actual ledger and $\lambda_k$ are $\calF_{k-1}$-measurable. Complete trajectories in group $k$ remain independent of $\calF_{k-1}$ and of each other.

Within the group, $p_i^c$ is a function of $(W_i,\calF_{k-1})$, in fact only of the initially available portion of $W_i$. Fix independent counterfactual continuation streams for each sample, whether or not that sample is ultimately committed. The stream and its own trajectory determine $U_i$, the indicator that continuation would resolve the sample if committed, and the product $q_i=\prod_{t\le b_i}P_i(t)$. For C set $U_i=q_i=1$. With $Z_i=\zeta_i U_i$ and $\pi_i=p_i^c q_i$,
\begin{equation}
 \E[Z_i\mid W_{I_k},\calF_{k-1}]=\pi_i,
 \quad
 \E\left[\frac{Z_iA_i(\tau)}{\pi_i}\,\middle|\,W_{I_k},\calF_{k-1}\right]=A_i(\tau).
 \label{eq:unbiased}
\end{equation}
The expectation here averages over pivot and continuation randomization; it does not imply independence of the $Z_i$. Each $q_i$ is a counterfactual product along that sample's full history, rather than the probability of reaching only a prematurely censored prefix. In practice, we need the full product only for samples with $Z_i=1$.

\subsection{Concentration and selection for LPBs}
\label{app:lpbproof}
For a fixed $\tau$, abbreviate $X_i^\tau=Z_iA_i(\tau)/\pi_i$ and $\mu=\mu(\tau)$. Conditional on $\calF_{k-1}$,
\begin{equation}
 \E X_i^\tau=\mu,\qquad
 \sigma_{ik}^2:=\operatorname{Var}(X_i^\tau\mid\calF_{k-1})
       =\E[A_i(\tau)/\pi_i\mid\calF_{k-1}]-\mu^2.
 \label{eq:moments}
\end{equation}
For any $s\ge0$, condition first on the current full trajectories and their counterfactual continuation streams. Apply~\eqref{eq:product} with $h_i=1-\exp\{-sU_iA_i(\tau)/\pi_i\}\in[0,1]$. Then integrate the independent continuation streams and current trajectories. Sample locality permits this final factorization, giving
\begin{equation}
 \E\left[e^{-s\sum_{i\in I_k}X_i^\tau}\mid\calF_{k-1}\right]
 \le\prod_{i\in I_k}\E[e^{-sX_i^\tau}\mid\calF_{k-1}].
 \label{eq:groupmgf}
\end{equation}

For completeness, if a centered random variable $Y\le1$ has variance $\sigma^2$, Taylor's formula with integral remainder implies
\[
 e^{sy}\le1+sy+(e^s-1-s)y^2\quad(y\le1,s\ge0).
\]
Indeed, the remainder divided by $s^2y^2$ is $\int_0^1(1-t)e^{tsy}\,dt$, at most its value at $y=1$. For $0<s<3$, the series bound $e^s-1-s\le s^2/[2(1-s/3)]$ therefore yields
\begin{equation}
 \E e^{sY}\le\exp\!\left\{\frac{s^2\sigma^2}{2(1-s/3)}\right\}.
 \label{eq:onesided}
\end{equation}
Apply this with $Y=\mu-X_i^\tau\le\mu\le1$. By~\eqref{eq:groupmgf}, the process
\begin{equation}
 \exp\left\{s\sum_{j\le k}\sum_{i\in I_j}(\mu-X_i^\tau)
       -\frac{s^2}{2(1-s/3)}\sum_{j\le k}\sum_{i\in I_j}\sigma_{ij}^2\right\}
 \label{eq:supermart}
\end{equation}
is a nonnegative supermartingale in the enlarged group filtration. Adaptive changes of the ledger are allowed because the group bound is conditional on its current past.

If $v(\tau)\le v_\star$ almost surely, the compensator sum is at most $N(v_\star-\mu^2)$. The exponential Markov inequality and optimization over $s$ give, for $x>0$,
\begin{equation}
 \Prob\{\mu-\muhat(\tau)\ge x\}
 \le\exp\left\{-\frac{Nx^2}{2(v_\star-\mu^2+x/3)}\right\}.
 \label{eq:lpbtail}
\end{equation}
The usual zero-variance case follows by a limit or directly from the deterministic centered sum. This is a lower-tail bound for a nonnegative HT estimator; no bounded upper weight is needed beyond existence of the stated second moments.

\begin{proof}[Proof of Theorem~\ref{thm:lpb}]
If no retained candidate has population risk greater than $\alpha+\Delta$, the assertion is automatic. Otherwise let $\tau^\dagger$ be the first such candidate. It is fixed after conditioning on the reference and model, even though it is unknown to the procedure. Since the population losses are nested, $\mu(\widehat\tau)>\alpha+\Delta$ implies $\widehat\tau\ge\tau^\dagger$. The inclusive-prefix rule then forces $\muhat(\tau^\dagger)\le\alpha$, and consequently
\[
 \mu(\tau^\dagger)-\muhat(\tau^\dagger)>\Delta.
\]
Use the exponential argument above at this single candidate, bounding its variance compensator by $N(v_\star-\alpha^2)$ since $\mu(\tau^\dagger)>\alpha$. The resulting upper bound is $\exp\{-N\Delta^2/[2((v_\star-\alpha^2)_++\Delta/3)]\}$, at most $\delta$ under~\eqref{eq:lpbdelta}. If $\alpha+\Delta\ge1$, the assertion is already trivial. Integrating over the conditioned reference/model preserves the probability bound, including when the declared envelope and $\Delta$ are functions of that reference.
\end{proof}

\subsection{Restricted moment envelopes and jumps}
\label{app:restricted}
Order the retained finite grid as $\tau_0=0<\cdots<\tau_J$. For $u\ge0$, define
\[
 \calT_+(u)=\{\tau_0\}\cup\{\tau_j:j\ge1,\ \mu(\tau_{j-1})\le u\}.
\]
Suppose $v_\sharp(u)$ bounds~\eqref{eq:varianceenvelope} almost surely at every candidate in $\calT_+(u)$. Theorem~\ref{thm:lpb} remains valid if~\eqref{eq:lpbdelta} uses $v_\sharp(\alpha+\Delta)$ in place of $v_\star$. Its proof is identical because the predecessor of $\tau^\dagger$ is safe and hence $\tau^\dagger\in\calT_+(\alpha+\Delta)$.

An envelope restricted to $\{\tau:\mu(\tau)\le u\}$ would miss the crossing candidate at a risk jump and would not justify this proof. Such jumps arise naturally with discrete event times and quantile grids. 

\subsection{Expected unused budget}
\label{app:shortfall}
We next quantify the effect of conditional pacing on unused budget.
\begin{proposition}[Pacing-defect identity]\label{prop:shortfall}
Let $S_k$ denote actual group spend and define the predictable defect
\[
 e_k=\frac{L_{k-1}}{K-k+1}-\E[S_k\mid\calF_{k-1}].
\]
For $K\ge2$,
\begin{equation}
 \E L_K\le\E L_{K-1}
 =\frac BK+\sum_{k=1}^{K-1}\frac{\E e_k}{K-k}
 \le\frac BK+\sum_{k=1}^{K-1}\frac{\E(e_k)_+}{K-k}.
 \label{eq:shortfall}
\end{equation}
In particular, exact conditional pacing in every nonfinal group implies $\E L_K\le B/K$. For $K=1$ the bound $\E L_1\le B$ is immediate.
\end{proposition}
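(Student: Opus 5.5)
The identity follows from a one-step recursion on the conditional mean of the ledger. Let $\beta_k=L_{k-1}/(K-k+1)$ be the pacing budget from~\eqref{eq:ledger}. Since $L_{k-1}$ is $\calF_{k-1}$-measurable and $L_k=L_{k-1}-S_k$, the definition of $e_k$ gives
\[
 \E[L_k\mid\calF_{k-1}]=L_{k-1}-\beta_k+e_k=\frac{K-k}{K-k+1}L_{k-1}+e_k .
\]
Dividing by $K-k$ (valid for $k\le K-1$) and taking expectations, the normalized quantity $G_k:=\E L_k/(K-k)$ satisfies
\[
 G_k=G_{k-1}+\frac{\E e_k}{K-k},\qquad G_0=\frac BK .
\]
Telescoping from $k=1$ to $K-1$ gives
\[
 \E L_{K-1}=G_{K-1}=\frac BK+\sum_{k=1}^{K-1}\frac{\E e_k}{K-k},
\]
which is the middle equality in~\eqref{eq:shortfall}.

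The last inequality in~\eqref{eq:shortfall} is immediate from $e_k\le(e_k)_+$ and $K-k>0$. The first inequality, $\E L_K\le\E L_{K-1}$, follows because $S_K\ge0$ pathwise: a group spends a nonnegative number of turns, so $L_K=L_{K-1}-S_K\le L_{K-1}$.

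All expectations involved are finite, since $0\le L_k\le B$ almost surely by the ledger invariant in the proof of Theorem~\ref{thm:budget}. Hence $e_k$ is bounded, $|e_k|\le B$, and the tower property is justified.

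For the corollary, exact conditional pacing means $e_k=0$ for every $k\le K-1$. The sum then vanishes and $\E L_K\le B/K$. For $K=1$ the claim is simply $L_1=B-S_1\le B$.

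No step is a real obstacle; the only care needed is to normalize by $K-k$ rather than $K-k+1$ so that the recursion telescopes exactly.
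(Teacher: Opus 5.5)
Your proposal is correct and follows the paper's proof: it uses the same one-step recursion $\E[L_k\mid\calF_{k-1}]=\frac{K-k}{K-k+1}L_{k-1}+e_k$, iterated over $k=1,\ldots,K-1$, and obtains the two inequalities from $S_K\ge0$ and $e_k\le(e_k)_+$. The only difference is bookkeeping. You normalize by $K-k$ so the recursion telescopes additively, whereas the paper computes the product coefficients $\prod_{j=k+1}^{K-1}(K-j)/(K-j+1)=1/(K-k)$ directly; your integrability remark is a harmless addition.
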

\begin{proof}
Put $h=K-k+1$. From $L_k=L_{k-1}-S_k$,
\[
 \E[L_k\mid\calF_{k-1}]=\frac{h-1}{h}L_{k-1}+e_k.
\]
Taking expectations and iterating for $k=1,\ldots,K-1$ telescopes the coefficient of $B$ to $1/K$. The coefficient of $\E e_k$ is $\prod_{j=k+1}^{K-1}(K-j)/(K-j+1)=1/(K-k)$. This proves the equality. Nonnegative spend gives $L_K\le L_{K-1}$, and $e_k\le(e_k)_+$ gives the last inequality.
\end{proof}
Recycling savings does not ensure that the reference-based $\lambda_k$ eliminates the pacing defect. Commitment caps, inaccurate cost forecasts, continuation thinning, or a shortage of useful remaining turns can all affect realized spend. For example, suppose all $T_i=1$ and $B>N$. Any policy that stops at an event can spend at most $N$ turns on these $N$ samples, regardless of its ledger accuracy. This rules out an unconditional utilization guarantee. 

\section{Reference Planning and Complete Algorithms}
\label{app:algorithms}
\subsection{The predictions from the reference set}
\label{app:reference}
For LPBs, choose the initial design index closest to the target risk:
\[
 \tau_\star^{\rm full}\in\arg\min_{\tau\le\prior}
       \left|R^{-1}\sum_{r\in\refdata}\widehat\Prob(T_r<\fhat_\tau(X_r)\mid X_r)-\alpha_{\rm design}\right|.
\]
We resolve ties using the first grid index. Once LPB depth $\ell$ is chosen, we replace the design index by $\min\{\tau_\star^{\rm full},\ell\}$ in grid order. For constructing LPBs, we set $\alpha_{\rm design}=.1$.

We construct the reference set using only initial PMFs. For C at multiplier $\lambda$, the expected surrogate cost is the mean of $p_r^{\rm prop}(\lambda)\widehat c_r$. For D, we condition each initial PMF on survival to the current prefix, without accessing future reference histories. We apply the continuation rule along this synthetic survival-conditioned path and sum the predicted reached-turn costs:
\begin{equation}
 c_{\refdata}(\lambda)=\frac1R\sum_{r\in\refdata}\sum_{t=1}^{d_r}
 \widehat\Prob(T_r\ge t\mid X_r)\,
 p_r^{\rm prop}(\lambda)\prod_{s=1}^t P_r^{\rm ref}(s;\lambda).
 \label{eq:refcost}
\end{equation}
This expression includes commitment and the first continuation draw. It forecasts the uncapped proposal. We use 192 logarithmically spaced multipliers from $10^{-9}$ to $10^9$, make the cost curve nonincreasing, and interpolate $\log\lambda$ against reversed cost to match $\beta_k/m_k$. Targets outside the tabulated cost range use the corresponding endpoint multiplier. Both reference model error and the subsequent commitment cap can therefore prevent exact conditional pacing.

For a working LPB sample that has survived through $t-1$, the design risk in~\eqref{eq:continue} is
\[
 \nu_i(t)=(1-\omega)+\omega\,
 \frac{\widehat\Prob\{T_i<\fhat_{\tau_\star}(X_i)\mid H_i(t-1),T_i\ge t\}}
      {\max(\bar r,10^{-12})}.
\]
We predict remaining cost as $\widehat e_i(t)=\widehat\E[\min(T_i-t+1,d_i-t+1)\mid H_i(t-1),T_i\ge t]$. Nonpositive remaining lengths have zero cost, and the policy is called only for required positive turns. We safeguard against denominators and square-root inputs by adding $10^{-12}$. 

\subsection{The LPB depth planner}
\label{app:depth}
Let the grid indices run from $0$ to $J$, with $0$ the zero bound. From the initial reference distribution, compute nondecreasing curves
\begin{align}
 c_j&=R^{-1}\sum_r\widehat\E[\min(T_r,(\fhat_{\tau_j}(X_r)-1)_+)\mid X_r],\\
 \widehat\mu_j&=R^{-1}\sum_r\widehat\Prob(T_r<\fhat_{\tau_j}(X_r)\mid X_r),\qquad
 s_j=R^{-1}\sum_r\fhat_{\tau_j}(X_r).
\end{align}
Use $\Gamma=\{.6,.8,1,1.25,1.6\}$, $\delta_{\rm plan}=.1$, $\epsilon_{\rm plan}=.02$, $B'= (1-\epsilon_{\rm plan})B/N$, and $g_j=\min\{1,B'/\max(c_j,10^{-12})\}$. Define the surrogate width and attainable value by
\begin{align}
 U_{j,r}(\gamma)&=\gamma\widehat\mu_r+
       \sqrt{\frac{2\log(5/\delta_{\rm plan})\gamma\widehat\mu_r}{g_jN}}
       +\frac{\log(5/\delta_{\rm plan})}{3g_jN},\\
 V_j(\gamma)&=\max\bigl(\{s_r:r\le j,\ U_{j,r}(\gamma)\le\alpha_{\rm design}\}\cup\{0\}\bigr).
\end{align}
Among $j=1,\ldots,J$, choose the first minimizer
\[
 \widetilde j=\arg\min_j\max_{\gamma\in\Gamma}
               \{\max_hV_h(\gamma)-V_j(\gamma)\}.
\]
Let $j_{\rm aff}=\max\{j:c_j\le B'\}$, with zero allowed, and set $\widehat j=\min\{J,\max(1,\widetilde j,j_{\rm aff})\}$. The retained endpoint is $\ell=\tau_{\widehat j}$. The quantities $g_j$ and $U_{j,r}$ are only planning surrogates: the actual resolution probability is determined by the ledger, pivot, and, for D, continuation. 

\begin{algorithm}[t]
\caption{Weighted pivotal commitments (positive-cost samples)}
\label{alg:pivot}
\begin{algorithmic}[1]
\REQUIRE Probabilities $p_i^c\in[0,1]$, positive costs $d_i\le M$
\STATE Set $y\gets p^c$.
\WHILE{at least two coordinates are fractional}
\STATE Pick two fractional coordinates $i,j$; compute $a_+,a_-$ by~\eqref{eq:pivotsteps}.
\STATE Draw $s=a_+$ with probability $a_-/(a_++a_-)$, otherwise $s=-a_-$.
\STATE Set $y_i\gets y_i+s/d_i$, $y_j\gets y_j-s/d_j$.
\ENDWHILE
\IF{one fractional coordinate $i$ remains}
\STATE Set $y_i\gets\operatorname{Bernoulli}(y_i)$.
\ENDIF
\RETURN $\zeta=y$.
\end{algorithmic}
\end{algorithm}

\begin{algorithm}[t]
\caption{\HARP-C/D execution with exact acquisition records}
\label{alg:full}
\begin{algorithmic}[1]
\REQUIRE Working initial features, fixed reference set, horizons $d_i$, cap $B$
\STATE Independently permute samples; form groups of size at most $m$; verify~\eqref{eq:feasible}.
\STATE Set $L=B$ and initialize all observed-turn counts to zero.
\FOR{$k=1,\ldots,K$}
\STATE Compute $\kappa_k,\beta_k$ from~\eqref{eq:ledger}; choose reference multiplier $\lambda_k$.
\STATE Compute $p_i^c$ by~\eqref{eq:commit}; set $p_i^c=\zeta_i=1$ for $d_i=0$.
\STATE Round positive-cost samples with Algorithm~\ref{alg:pivot}; set $S_k=0$.
\FOR{each sample $i\in I_k$}
\STATE Set $a_i=p_i^c$; record commitment and horizon; if $d_i=0$, mark resolved.
\IF{$\zeta_i=1$ and $d_i>0$}
\FOR{$t=1,\ldots,d_i$ until an event or a continuation stop}
\STATE For C set $P_i(t)=1$; for D use~\eqref{eq:continue} from the acquired prefix.
\STATE Record $P_i(t)$ and update $a_i\gets a_iP_i(t)$.
\STATE Draw continuation; if it fails, stop this sample without paying for turn $t$.
\STATE Otherwise acquire turn $t$, increment the sample's count and $S_k$, and update history.
\STATE If the event occurs or $t=d_i$, mark resolved and store $\pi_i=a_i$.
\ENDFOR
\ENDIF
\ENDFOR
\STATE Set $L\gets L-S_k$; retain the records needed for the chosen estimator.
\ENDFOR
\RETURN Acquired prefixes, commitment/continuation probabilities, resolution flags, total spend.
\end{algorithmic}
\end{algorithm}
We implement continuation with independent random uniforms; C can omit draws with probability one. An unresolved sample contributes zero to the LPB estimator. For sequential AHT, each reached turn uses its own recorded cumulative probability, including commitment at the first turn. Although a replay cache may store full histories, the policy must not access future predictions.

\section{Upper Predictive Bounds}
\label{app:upb}
\subsection{Candidate family, acquisition, and selection}
For each finite candidate $f_i=\fhat_\tau(X_i)\le M$, define $A_i^{\rm U}(\tau)=\ind{T_i>f_i}$. When $f_i=\infty$, we set $A_i^{\rm U}=0$ deterministically. We represent infinity by 201 when $M=200$; this value is not an observed event on turn 201. We retain the grid through $\prior$, together with the last, all-infinite candidate, excluding finite candidates beyond the acquisition prior. Every retained finite candidate therefore satisfies $f_i\le d_i=\min\{\fhat_{\prior}(X_i),M\}$.

We choose the reference anchor $\tau_a\le\prior$ as the first candidate for which the entire subsequent reference-predicted coverage suffix reaches the target $1-\alpha$. If no candidate satisfies this condition, we use $\prior$. Unlike the LPB closest-risk anchor, this rule uses a coverage suffix. The design score also changes: we predict \emph{survival beyond} $\fhat_{\tau_a}(X_i)$ from the reference and observed history, rather than an event strictly before an LPB endpoint. An infinite anchor has zero initial risk. We then apply the same smoothing and cost-based continuation formula. 

Without depth, each sample has acquisition horizon $d_i$. For \HARP with depth version, let $a_i=\min\{\fhat_{\tau_a}(X_i),d_i\}$, clipped to $[1,M]$, and draw independently
\begin{equation}
 D_i=\begin{cases}d_i,&\text{with probability }q,\\a_i,&\text{with probability }1-q,\end{cases}
 \qquad q=.25.\label{eq:insured}
\end{equation}
We use $D_i$ as both commitment cost and horizon in Algorithm~\ref{alg:full}, and form the reference cost curve from the corresponding $q/(1-q)$ mixture of full and shallow forecasts. This reduces late acquisition while retaining every finite target and the reported UPB family. Define the depth-tail function
\begin{equation}
 G_i(t)=\Prob(D_i\ge t\mid X_i)=
 \begin{cases}1,&t\le a_i,\\q,&a_i<t\le d_i.\end{cases}\label{eq:depthtail}
\end{equation}
The commitment and continuation probabilities can depend on the sampled $D_i$. Conditional on that depth and the complete trajectory, write their product through turn $t$ as $r_i^{D_i}(t)$ when $t\le D_i$. Beyond $D_i$, extend the product positively, for example, with continuation factors of one; those terms are never observed. The effective weighting quantity is
\begin{equation}
 \rho_i^{D_i}(t)=G_i(t)r_i^{D_i}(t).\label{eq:effectivereach}
\end{equation}
Since $r_i^{D_i}$ can differ between the two horizons, this quantity is generally \emph{not} the unconditional marginal reach probability. Validity instead follows from the weighting identity below. To represent $G_i(t)$, we multiply the recorded conditional probability at turn $t$ by $G_i(t)/G_i(t-1)$, using $G_i(0)=1$.

For a finite candidate, we initialize $m_{i0}(\tau)=\widehat\Prob(T_i>f_i\mid X_i)$. If an event occurs by $f_i$, we set the post-event prediction to zero; if the sample survives through $f_i$, we set it to one. Before resolution, we use the pre-trained model at the reached history, and after resolution, we keep the value fixed. For an infinite candidate, all predictions and the loss remain zero. We apply~\eqref{eq:aht} until resolution or acquisition stopping and average sample contributions to obtain $\muhat_{\rm U}(\tau)$.

Although population miscoverage decreases with $\tau$, the signed empirical curve need not be decreasing. We therefore select the \emph{smallest} candidate satisfying
\begin{equation}
 \max_{u\ge\tau}\muhat_{\rm U}(u)\le\alpha,\label{eq:upbselector}
\end{equation}
where the maximum includes the retained finite candidates and the infinite fallback. Equivalently, we require the minimum empirical coverage over the suffix to be at least $1-\alpha$. The selector uses this ordering, rather than an isolated crossing of the raw empirical curve.

\subsection{Unbiasedness with depth-dependent acquisition}
\begin{lemma}[Sequential weighting identity]\label{lem:aht}
Fix the past groups and full current trajectories. Assume the independent depth draw has a positive tail $G_i(t)$ on each required prefix, and the model prediction path is a causal function of the trajectory and fixed design, not of the sampled depth. Then the estimator in~\eqref{eq:aht}, with~\eqref{eq:effectivereach}, satisfies
\[
 \E[V_i(\tau)\mid W_{I_k},\calF_{k-1}]=A_i^{\rm U}(\tau).
\]
\end{lemma}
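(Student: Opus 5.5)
The plan is to treat each term of~\eqref{eq:aht} separately and show that, conditional on $(W_{I_k},\calF_{k-1})$, each increment weighted by $R_{it}/\rho_{it}$ has conditional mean equal to the unweighted increment $m_{it}(\tau)-m_{i,t-1}(\tau)$. Summing then telescopes: $m_{i0}(\tau)+\sum_t[m_{it}(\tau)-m_{i,t-1}(\tau)]$ equals the final value of the prediction path, which the construction sets to the true loss $A_i^{\rm U}(\tau)$ upon resolution and holds fixed afterward. For an infinite candidate every term is identically zero, so there is nothing to prove. Hence the whole statement reduces to the reach identity
\[
 \E[R_{it}\mid W_{I_k},\calF_{k-1}] \;=\; \text{(effective weight)}\;\rho_{it}
\]
in the sense of the weighting identity, for every turn $t$ at which the increment can be nonzero.

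The key observation is that, with the full trajectory $W_i$ fixed, the path $m_{it}(\tau)$ is deterministic. It is a causal function of the trajectory and the fixed design, and by assumption it does not depend on the sampled depth. So the increments are constants, and linearity leaves only the reach indicator random. I would compute $\E[R_{it}/\rho_i^{D_i}(t)\mid W_{I_k},\calF_{k-1}]$ by conditioning first on $D_i$, which is drawn independently of the trajectory. Given $D_i=D$, turn $t\le D$ is reached exactly when three things happen: the sample is committed, each continuation coin through $t$ succeeds, and the sample has not been resolved earlier. After resolution the increments vanish, so that last case contributes nothing.

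By Lemma~\ref{lem:pivot} the commitment has conditional mean $p_i^c$, and this holds despite the dependence across samples. The continuation coins are sample-local and independent of the pivot, using the counterfactual streams of Appendix~\ref{app:filtration}. Together these give $\E[R_{it}\mid D_i=D,W_{I_k},\calF_{k-1}]=r_i^{D}(t)\ind{t\le D}$. Dividing by $\rho_i^{D}(t)=G_i(t)r_i^D(t)$ leaves $\ind{t\le D}/G_i(t)$. Averaging over $D_i$ then gives $\Prob(D_i\ge t\mid X_i)/G_i(t)=1$ by~\eqref{eq:depthtail}, which requires $G_i(t)>0$ on the required prefixes.

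The step I expect to be delicate is exactly this division by a depth-dependent $r_i^{D_i}(t)$ before averaging over $D_i$. It is the reason $\rho$ is not a marginal reach probability, and the argument must condition on $D_i$ first rather than marginalize. Two further points need care here. The recorded quantity at turn $t$, formed as the commitment times the continuation factors times the ratios $G_i(t)/G_i(t-1)$, must equal $G_i(t)r_i^{D_i}(t)$ exactly. And the positive extension beyond $D_i$ must be harmless, because $R_{it}=0$ there. A final point is that $p_i^c$ may depend on $D_i$ through the cap~\eqref{eq:commit} and on the group's other depths via the pivot inputs. This is handled by conditioning on all depths in the group, which are independent of $W_{I_k}$. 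Then only the marginal property $\E\zeta_i=p_i^c$ is used, so no independence across samples is required.
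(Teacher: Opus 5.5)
Your proposal is correct and follows the paper's proof essentially step for step. Conditioning on all current depths and trajectories, the marginal pivot property and the independent continuation stream give $\E[R_{it}\mid D,W_{I_k},\calF_{k-1}]=\ind{D_i\ge t}\,r_i^{D_i}(t)$; the depth-dependent reach cancels when you divide by $G_i(t)r_i^{D_i}(t)$, before averaging over the depth coin; and the fixed increments then telescope to $A_i^{\rm U}(\tau)$. One small wording fix: the depths are not literally independent of $W_{I_k}$, since their values $a_i,d_i$ depend on $X_i$; what you use is that the depth coin is independent, so $\Prob(D_i\ge t\mid W_{I_k},\calF_{k-1})=G_i(t)$.
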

\begin{proof}
Consider a turn $t\le\min(T_i,f_i)$ where a prediction increment can be nonzero. Conditional on all current depths and full trajectories, the marginal commitment property and the independent continuation stream give
\[
 \E[R_{it}\mid D,W_{I_k},\calF_{k-1}]
       =\ind{D_i\ge t}\,r_i^{D_i}(t).
\]
After division by $G_i(t)r_i^{D_i}(t)$ and averaging over the independent depth draw, the expectation is one. Depth-dependent commitment probabilities cancel \emph{inside} that expectation. The prediction increment is fixed by the trajectory and is independent of the depth draw, so every term has expectation $m_{it}-m_{i,t-1}$. Their sum telescopes from $m_{i0}$ to the terminal value $A_i^{\rm U}$. Later increments are zero. Infinite candidates are deterministically zero. This proof requires correct design probabilities and correct resolution.
\end{proof}
For the superpopulation theorem, we reveal the current group's independent depth coins only when processing that group in the proof filtration. Practically, these coin tosses can be pre-sampled without requiring the proof to condition on all future draws.

\subsection{A valid range bound for signed updates}
\begin{lemma}[Range of a sequential augmented contribution]\label{lem:range}
Let $m_0,\ldots,m_J\in[0,1]$ and $1\le w_1\le\cdots\le w_J\le W$. Then
\[
 1-W\le m_0+\sum_{t=1}^Jw_t(m_t-m_{t-1})\le W.
\]
The same interval contains $m_0$ when $J=0$.
\end{lemma}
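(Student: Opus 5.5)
Summation by parts is the natural first move. With the convention $w_0 := 1$ (so $w_0\le w_1$), we rewrite
\[
 V := m_0+\sum_{t=1}^J w_t(m_t-m_{t-1})
   = w_J m_J-\sum_{t=0}^{J-1}(w_{t+1}-w_t)m_t + (1-w_0)m_0 ,
\]
and the last term vanishes. To verify this, expand $\sum_t w_t m_t-\sum_t w_t m_{t-1}$, shift the index in the second sum, and collect the coefficient of each $m_t$. The coefficient of $m_J$ is $w_J$. For $1\le t\le J-1$ it is $w_t-w_{t+1}\le0$. For $m_0$ it is $1-w_1\le0$, which matches $-(w_1-w_0)$ with $w_0=1$. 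After this step $V$ is a linear function of $(m_0,\ldots,m_J)\in[0,1]^{J+1}$, with exactly one nonnegative coefficient $w_J$ and all other coefficients nonpositive.

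Next we bound this linear form over the cube, using the fact that its extremes are attained at vertices and can be read off coordinatewise. For the upper bound, set $m_J=1$ and every other $m_t=0$, which gives $V\le w_J\le W$. For the lower bound, set $m_J=0$ and every other $m_t=1$. This gives $V\ge-\sum_{t=0}^{J-1}(w_{t+1}-w_t)$, and the sum telescopes to $-(w_J-w_0)=1-w_J\ge 1-W$.

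For the case $J=0$, the sum is empty, so $V=m_0\in[0,1]\subset[1-W,W]$ because $W\ge1$. The same holds in general: $W\ge w_1\ge1$ whenever $J\ge1$, and when $J=0$ the condition $W\ge1$ comes from the convention $\rho_{it}\ge 1/W$ with $\rho\le1$.

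The only point that needs care is the index bookkeeping in the summation by parts, in particular the boundary terms at $t=0$ and $t=J$. The convention $w_0=1$ absorbs the standalone $m_0$ term cleanly. Once the identity is written down, the two bounds follow immediately from the sign pattern of the coefficients.
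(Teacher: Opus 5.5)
Your proof is correct and follows the paper's argument: the same summation by parts puts coefficient $w_J$ on $m_J$ and nonpositive coefficients summing to $1-w_J$ on the other terms, which places the value in $[1-w_J,w_J]\subseteq[1-W,W]$. You also note that the $J=0$ case needs $W\ge1$, since the hypotheses are then vacuous; this is a fair point that the paper leaves implicit.
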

\begin{proof}
For $J\ge1$, summation by parts gives
\[
 (1-w_1)m_0+\sum_{t=1}^{J-1}(w_t-w_{t+1})m_t+w_Jm_J.
\]
All coefficients except the last are nonpositive and sum to $1-w_J$. Their minimum contribution is $1-w_J$, while the positive coefficient contributes at most $w_J$. Hence the expression lies in $[1-w_J,w_J]\subseteq[1-W,W]$. This argument is sharper than separately bounding all $J$ increments by $W$.
\end{proof}
Every observed sample consists of a prefix. Along that prefix,~\eqref{eq:effectivereach} is nonincreasing because both $G_i(t)$ and the conditional-depth reach are nonincreasing, so the range lemma applies to each acquired sample. A sufficient uniform choice is
\[
 W=\begin{cases}
 (q p_0)^{-1},&\text{C with insured depth},\\
 (q p_0p_{\min}^{M})^{-1},&\text{D with insured depth},
 \end{cases}
\]
with $q=1$ when depth is not used. 
For the specified D policy,~\eqref{eq:policyfloor} improves the sufficient choice to $W=(q\widetilde p_0)^{-1}$.

\subsection{Signed pivotal concentration and adaptive reflow}
\label{app:signedmgf}
To analyze signed updates, we use the structure of the UPB estimator rather than treating each group as an arbitrary bounded sum. Fix a candidate and the group past. For each sample, introduce a latent variable $U_i$ containing its complete trajectory, depth draw, and independent counterfactual continuation stream. These variables are conditionally independent across samples. The commitment probabilities $p_i^c$ and horizons are functions of $U_i$ and the past, computed using only the information available to the procedure. We can then write
\begin{equation}
 V_i=m_{i0}(U_i)+\zeta_i h_i(U_i),\label{eq:affinesigned}
\end{equation}
where $h_i$ sums the weighted prediction increments that would be acquired if sample $i$ were committed. The coefficient $h_i$ may have either sign, but is independent of the pivot coins conditional on $U_i$ and the past. This representation applies to C and D because within-sample stopping never depends on another current sample's history and the group multiplier is fixed. It also applies to insured depth: the draw belongs to $U_i$ and does not change the model prediction path.

\begin{lemma}[A signed exponential comparison]\label{lem:signedmgf}
Under this sample-local representation and the two cylinder inequalities of Lemma~\ref{lem:pivot}, for every real $s$ for which the marginal moments exist,
\begin{equation}
 \E\left[e^{-s\sum_{i\in I_k}V_i}\mid\calF_{k-1}\right]
 \le\prod_{i\in I_k}
       \left(\E[e^{-2sV_i}\mid\calF_{k-1}]\right)^{1/2}.
 \label{eq:signedmgf}
\end{equation}
\end{lemma}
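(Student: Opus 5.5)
Following the summary after Theorem~\ref{thm:upb}, I would condition on the latent variables and split the samples by the sign of their coefficient. Condition on $\calF_{k-1}$ and on the sample-local latents $U=(U_i)_{i\in I_k}$. By~\eqref{eq:affinesigned}, $\sum_i V_i=\sum_i m_{i0}(U_i)+\sum_i \zeta_i h_i(U_i)$, and the first sum is fixed given $U$. Partition the group into $P=\{i:s h_i\ge0\}$ and $Q=\{i:s h_i<0\}$. Then
\[
e^{-s\sum_i V_i}=e^{-s\sum_i m_{i0}}\prod_{i\in P}e^{-s\zeta_i h_i}\prod_{i\in Q}e^{-s\zeta_i h_i}.
\]
The $P$-factor is a decreasing function of $\zeta_P$ and takes values in $[0,1]$ coordinatewise. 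The $Q$-factor is an increasing function that can exceed one.

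The first step handles the two factors with Cauchy--Schwarz over the pivot coins, given $U$:
\[
\E\Bigl[\textstyle\prod_{P}\prod_{Q}\Bigm| U\Bigr]\le\Bigl(\E\bigl[\prod_{i\in P}e^{-2s\zeta_ih_i}\bigm| U\bigr]\Bigr)^{1/2}\Bigl(\E\bigl[\prod_{i\in Q}e^{-2s\zeta_ih_i}\bigm| U\bigr]\Bigr)^{1/2}.
\]
For $P$, write $e^{-2s\zeta_ih_i}=1-g_i\zeta_i$ with $g_i=1-e^{-2sh_i}\in[0,1]$, which uses that $\zeta_i\in\{0,1\}$. Inequality~\eqref{eq:product} then bounds the expectation by $\prod_P(1-g_ip_i^c)=\prod_P\E[e^{-2s\zeta_ih_i}\mid U]$, the independent-Bernoulli value. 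For $Q$, write $e^{-2s\zeta_ih_i}=1+c_i\zeta_i$ with $c_i=e^{-2sh_i}-1>0$. Expanding the product gives a nonnegative combination of $\prod_{i\in J}\zeta_i$, and the first cylinder inequality in~\eqref{eq:cylinders} bounds it by $\prod_Q(1+c_ip_i^c)$. The conditioning is legitimate because $p_i^c$ and $h_i$ are functions of $(U_i,\calF_{k-1})$ only, and Lemma~\ref{lem:pivot} applies conditional on the input group. Together these give
\[
\E\bigl[e^{-s\sum V_i}\bigm| U\bigr]\le\prod_i\Bigl(\E_{\zeta_i\sim\mathrm{Bern}(p_i^c)}\bigl[e^{-2sm_{i0}-2s\zeta_ih_i}\bigr]\Bigr)^{1/2}.
\]
Here I absorbed $e^{-sm_{i0}}=(e^{-2sm_{i0}})^{1/2}$ into each factor. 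Each sample appears in exactly one of $P,Q$, so the product of the two square roots is a single product of per-sample square roots.

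The second step integrates over $U$. The $U_i$ are conditionally independent given $\calF_{k-1}$ (Appendix~\ref{app:filtration}), and each factor depends only on $U_i$. The outer expectation therefore factorizes into $\prod_i\E[(\cdot)^{1/2}\mid\calF_{k-1}]$. Jensen's inequality for the concave square root moves the expectation inside: $\E[Y^{1/2}]\le(\E Y)^{1/2}$. The inner quantity $\E[\E_{\zeta_i}(e^{-2sV_i})\mid\calF_{k-1}]$ equals $\E[e^{-2sV_i}\mid\calF_{k-1}]$, since the marginal law of $\zeta_i$ given $U_i$ is Bernoulli$(p_i^c)$ by~\eqref{eq:pivotproperties}. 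This yields~\eqref{eq:signedmgf}.

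The main obstacle is the first step. The partition $P,Q$ depends on $U$, so the cylinder inequalities must be applied after conditioning on $U$. This requires the pivot coins to remain governed by Lemma~\ref{lem:pivot} given the full latents, and not merely given the observed group data. I would justify this by noting that the pivot uses only $(p_i^c,d_i)$, and these are $U$-measurable, so its randomization is independent of the rest of $U$. Zero-cost samples have $\zeta_i=1$ deterministically and contribute trivially. Finiteness of the moments is assumed, and is in fact automatic from the range bound of Lemma~\ref{lem:range}.
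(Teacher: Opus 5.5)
Your proposal is correct and follows the paper's proof essentially step for step. Both arguments condition on the sample-local latents and the group past, use Cauchy--Schwarz to separate the two sign classes, and apply the down-cylinder product bound to one class and the expanded up-cylinder bound to the other. Both then factorize through conditional independence of the latents, apply Jensen to the square root, and use the preserved marginal $p_i^c$ to identify each factor with $\E[e^{-2sV_i}\mid\calF_{k-1}]$. The only differences are minor: partitioning by the sign of $s h_i$ handles in one case what the paper treats separately for $s<0$ (``interchange the two sign sets''), and your remark that the pivot uses only $U$-measurable inputs spells out a conditioning the paper takes for granted.
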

\begin{proof}
Condition on every $U_i$ and the group past, so all coefficients and probabilities are fixed. For $s\ge0$, split indices into $J_+=\{i:h_i\ge0\}$ and $J_-=\{i:h_i<0\}$. Cauchy--Schwarz gives
\[
 \E_\zeta e^{-s\sum_i\zeta_i h_i}
 \le\left(\E_\zeta e^{-2s\sum_{i\in J_+}\zeta_i h_i}\right)^{1/2}
     \left(\E_\zeta e^{-2s\sum_{i\in J_-}\zeta_i h_i}\right)^{1/2}.
\]
The first expectation is bounded by $\prod_{i\in J_+}(1-p_i^c+p_i^ce^{-2sh_i})$ using the down-cylinder expansion in~\eqref{eq:product}. In the second, each factor is $1+(e^{-2sh_i}-1)\zeta_i$, with a nonnegative coefficient. Expansion and the up-cylinder inequality give the same product form for $J_-$. Including the fixed baselines therefore bounds the full conditional expectation by
\[
 \prod_i\left[(1-p_i^c)e^{-2sm_{i0}}+p_i^ce^{-2s(m_{i0}+h_i)}\right]^{1/2}.
\]
Each bracket is a function of its own $U_i$ and the past. Conditional independence of the $U_i$ factors the expectation of their product. Jensen's inequality for the square root bounds each factor by the square root of its expectation. That expectation is the actual marginal $\E[e^{-2sV_i}\mid\calF_{k-1}]$, because the pivot preserves the conditional marginal $p_i^c$. This proves~\eqref{eq:signedmgf}. For $s<0$, interchange the two sign sets.
\end{proof}

\begin{proof}[Proof of Theorem~\ref{thm:upb}]
By Lemma~\ref{lem:aht}, $\E[V_i\mid\calF_{k-1}]=\mu_{\rm U}(\tau)=\mu$ for a fixed candidate. Its range has length $a=2W-1$ by Lemma~\ref{lem:range}. Conditional Hoeffding's lemma gives $\E e^{t(\mu-V_i)}\le e^{t^2a^2/8}$; this follows by bounding the exponential on the range interval by its endpoint chord and maximizing the centered two-point moment generating function. Apply~\eqref{eq:signedmgf} and bound each centered marginal mgf at $t=2s$ to obtain
\[
 \E\left[e^{s\sum_{i\in I_k}(\mu-V_i)}\mid\calF_{k-1}\right]
 \le e^{s^2m_ka^2/4}.
\]
Iterated conditional expectations over the ledger filtration give $\E e^{s(N\mu-\sum_iV_i)}\le e^{s^2Na^2/4}$. Exponential Markov with $s=2x/a^2$ yields
\begin{equation}
 \Prob\{\mu-N^{-1}\sum_iV_i\ge x\}\le e^{-Nx^2/a^2}.\label{eq:upbtail}
\end{equation}
Reverse the candidate order, starting at infinity. Population loss is now nondecreasing and~\eqref{eq:upbselector} is an inclusive-prefix rule. Unsafe selection requires underestimating the fixed first population-unsafe candidate by more than $\Delta_{\rm U}$. Apply~\eqref{eq:upbtail} there and integrate over the independent reference, as in the LPB proof.
\end{proof}

\begin{corollary}[Variance-sensitive signed-UPB bound]\label{cor:upbvariance}
Under the same assumptions, suppose a reference-measurable envelope $v_{\rm U}$ satisfies, almost surely at every retained candidate,
\[
 \frac1N\sum_k\sum_{i\in I_k}\E[V_i(\tau)^2\mid\calF_{k-1}]\le v_{\rm U}.
\]
Then~\eqref{eq:pac} also holds for any correction satisfying
\begin{equation}
 \Delta\ge\frac{2WL}{3N}+
    \sqrt{\frac{4W^2L^2}{9N^2}+\frac{4(v_{\rm U}-\alpha^2)_+L}{N}},
 \qquad L=\log(1/\delta).\label{eq:upbvariance}
\end{equation}
\end{corollary}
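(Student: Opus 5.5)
The plan is to follow the LPB proof of Theorem~\ref{thm:lpb}, with the range-only Hoeffding step in the proof of Theorem~\ref{thm:upb} replaced by a Bernstein-type bound. First I fix a candidate $\tau$ and abbreviate $V_i=V_i(\tau)$ and $\mu=\mu_{\rm U}(\tau)$. Lemma~\ref{lem:aht} gives $\E[V_i\mid\calF_{k-1}]=\mu$. Lemma~\ref{lem:range} gives $V_i\in[1-W,W]$, so the centered variable $Y_i=\mu-V_i$ satisfies $Y_i\le\mu-(1-W)\le W$. Its conditional variance is $\sigma_{ik}^2=\E[V_i^2\mid\calF_{k-1}]-\mu^2$.

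Next I control a single centered marginal. I rescale by $W$ and apply~\eqref{eq:onesided} to $Y_i/W\le1$, which gives, for $0\le t<3/W$,
\[
 \E[e^{tY_i}\mid\calF_{k-1}]\le\exp\left\{\frac{t^2\sigma_{ik}^2}{2(1-tW/3)}\right\}.
\]
The group step uses Lemma~\ref{lem:signedmgf}. I multiply~\eqref{eq:signedmgf} by the deterministic factor $e^{sm_k\mu}$ and bound each square-rooted marginal at $t=2s$. This yields
\[
 \E\left[e^{s\sum_{i\in I_k}(\mu-V_i)}\mid\calF_{k-1}\right]\le\exp\left\{\frac{2s^2\sum_{i\in I_k}\sigma_{ik}^2}{1-2sW/3}\right\}
\]
for $0\le s<3/(2W)$. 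The factor-of-two cost from Cauchy--Schwarz therefore doubles the effective variance and doubles the effective range.

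I then chain the groups exactly as in~\eqref{eq:supermart}. The exponential process with this compensator is a nonnegative supermartingale over the ledger filtration. The envelope bounds the compensator sum by $N(v_{\rm U}-\mu^2)$. Exponential Markov and optimization over $s$ then give the Bernstein tail
\[
 \Prob\{\mu-\muhat_{\rm U}(\tau)\ge x\}\le\exp\left\{-\frac{Nx^2}{4(v_{\rm U}-\mu^2)+4Wx/3}\right\}.
\]
Setting this equal to $\delta$ leads to a quadratic inequality in $x$, namely $Nx^2\ge 4L(v_{\rm U}-\mu^2)+\tfrac{4}{3}WLx$. Its positive root is the right-hand side of~\eqref{eq:upbvariance} with $\mu^2$ in place of $\alpha^2$. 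Hence any $\Delta$ satisfying~\eqref{eq:upbvariance}, with $\mu^2$ replaced by $\alpha^2$, makes the tail at most $\delta$.

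Selection follows the last step of the proof of Theorem~\ref{thm:upb}. I reverse the candidate order starting from the infinite fallback and let $\tau^\dagger$ be the first population-unsafe candidate, which is fixed given the reference. An unsafe selection forces $\muhat_{\rm U}(\tau^\dagger)\le\alpha$ while $\mu(\tau^\dagger)>\alpha+\Delta$. Since $\mu(\tau^\dagger)>\alpha$, I can replace $v_{\rm U}-\mu^2$ by $(v_{\rm U}-\alpha^2)_+$. I then apply the tail at $\tau^\dagger$ and integrate over the reference.

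I expect the main obstacle to be the validity of the Bernstein step under the factor of two. Two points need checking. The one-sided bound at $t=2s$ requires $Y_i\le W$ rather than merely boundedness of $|Y_i|$. The range in $s$ must also be restricted to $s<3/(2W)$ so that the optimized $s$ stays admissible; this restriction is what produces the $2W/3$ coefficient.
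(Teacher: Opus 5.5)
Your proposal is correct and follows essentially the same route as the paper's proof. Both use the $W$-scaled one-sided Bernstein bound evaluated at $t=2s$ inside the signed Cauchy--Schwarz comparison of Lemma~\ref{lem:signedmgf}, then the LPB-style exponential supermartingale over groups with compensator bounded by $N(v_{\rm U}-\mu^2)$, and finally the first-unsafe-candidate argument with $\mu^2$ replaced by $\alpha^2$.

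There is one arithmetic slip in your displayed group bound. The square root halves the exponent, so the bound should be $\exp\{s^2\sum_{i\in I_k}\sigma_{ik}^2/(1-2sW/3)\}$, as in the paper, not $2s^2$. Your displayed version would only give a tail denominator of $8(v_{\rm U}-\mu^2)+4Wx/3$. The tail you actually state, and hence~\eqref{eq:upbvariance}, follows from the corrected bound.
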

\begin{proof}
Since $\mu-V_i\le W$, the scaled one-sided Bernstein bound~\eqref{eq:onesided} applies. Combining it with~\eqref{eq:signedmgf} gives a conditional group mgf bounded by
\[
 \exp\left\{\frac{s^2\sum_{i\in I_k}\operatorname{Var}(V_i\mid\calF_{k-1})}
                    {1-2sW/3}\right\},\qquad 0<s<3/(2W).
\]
The sum of marginal conditional variances over groups is at most $N(v_{\rm U}-\mu^2)$. The exponential-supermartingale argument used for LPBs, now with variance factor two and scale $2W/3$, yields a lower-tail bound $\exp\{-Nx^2/[4(v_{\rm U}-\mu^2+Wx/3)]\}$. At the first unsafe candidate, replace $\mu^2$ by $\alpha^2$ and solve the resulting quadratic inequality to obtain~\eqref{eq:upbvariance}.
\end{proof}
We may take the minimum of the range-only and variance-sensitive corrections, since each bounds the same fixed-candidate failure event deterministically conditional on the reference. Treating each group as an arbitrary dependent block would instead give $(2W-1)\sqrt{\sum_km_k^2L/(2N^2)}$. That argument requires only conditional unbiasedness and the sample range. The sharper signed comparison also requires the sample-local affine representation and both cylinder inequalities, which hold for \HARP.

The signed comparison does not imply negative covariance between the $V_i$. For example, selecting exactly one of two equally likely items and multiplying the indicators by opposite signed coefficients makes contribution covariance positive, even though selection covariance is negative. The resulting variance is twice the sum of independent marginal variances. Thus, reusing the LPB product argument for signed updates without an additional cost would be incorrect. An independent-sample sequential-AHT variance diagnostic may provide marginal terms. Both valid UPB bounds can still be vacuous with small propensities.

\begin{algorithm}[t]
\caption{UPB construction under the \HARP acquisition design}
\label{alg:upb}
\begin{algorithmic}[1]
\REQUIRE UPB quantiles including infinity, target $1-\alpha$, reference initial PMFs
\STATE Retain finite candidates through the prior and the all-infinite fallback.
\STATE Select reference anchor by the predicted coverage suffix rule; set full horizons $d_i$.
\STATE If using depth, draw $D_i$ by~\eqref{eq:insured}; otherwise set $D_i=d_i$, $G_i=1$.
\STATE Build the reference cost set and run Algorithm~\ref{alg:full} with horizons $D_i$.
\FOR{each working sample and retained finite candidate}
\STATE Start from $m_{i0}$; for each reached required turn add $(m_{it}-m_{i,t-1})/\rho_i^{D_i}(t)$.
\STATE Impose loss zero after an event or one on survival to the finite bound; freeze thereafter.
\ENDFOR
\STATE Set infinite-candidate contributions to zero; average over all $N$ working samples.
\RETURN The smallest candidate satisfying~\eqref{eq:upbselector}.
\end{algorithmic}
\end{algorithm}

\section{Population-Level Metric Estimation}
\label{app:metrics}
Following \DAPRO's evaluation setup, we consider full-horizon event probability and restricted mean event time:
\begin{equation}
 p_M=\Prob(T\le M),\qquad r_M=\E[\min(T,M)].\label{eq:metrics}
\end{equation}
For harmful events, these quantities describe failure probability and time to failure. In AutoIF, the event is successful completion, so the interpretation is reversed. Neither target depends on the selected LPB endpoint. The mean event time conditional on $T\le M$ is a separate, third estimand.

For metric acquisition, we use design event $T\le M$ and full horizon $M$, retaining the same commitment and continuation rules for C and D. With depth, the LPB reference planner supplies a shallow horizon $a_i=\clip((\fhat_\ell(X_i)-1)_+,1,M)$, with probability $.25$ of acquiring to $M$. Applying the same $G_i(t)$ correction as in~\eqref{eq:depthtail} preserves the full-horizon target. The planner optimizes a surrogate for attainable LPB size, however, rather than metric variance.

Let $R_{iT}$ indicate that an event at $T_i\le M$ was observed and let $\rho_i(T_i)$ be its effective reach weight. Ordinary unbiased estimators are
\begin{equation}
 \widehat p_M^{\rm HT}=\frac1N\sum_i\frac{R_{iT}\ind{T_i\le M}}{\rho_i(T_i)},\qquad
 \widehat r_M=M-\frac1N\sum_i
       \frac{R_{iT}(M-T_i)\ind{T_i\le M}}{\rho_i(T_i)}.
 \label{eq:metricHT}
\end{equation}
We assign zero contribution to unobserved events, so no unknown event propensity is needed for them. The complement form for $\widehat r_M$ also avoids requiring a fully observed event-free trajectory, using the identity $\min(T,M)=M-(M-T)\ind{T\le M}$. We use this same terminal weighted construction for restricted mean under both acquired methods.

For event rate, both acquired methods use sequential augmentation. In~\eqref{eq:aht}, we let $m_{it}$ predict $\ind{T_i\le M}$, set it to one upon observing an event and to zero upon event-free survival through $M$, and average $V_i$. The depth-averaged telescoping proof in Lemma~\ref{lem:aht} applies unchanged. We apply this estimator to both acquired methods.

For a fixed finite population, the weighting identity gives design unbiasedness for the working-sample mean, conditional on the complete benchmark and reference split. Averaging over a uniform reference split recovers the parent-cohort mean. With iid working trajectories, it also gives population unbiasedness for~\eqref{eq:metrics}; the budget theorem applies in either setting. For a bounded sequential augmented metric, the proof of Theorem~\ref{thm:upb} gives a two-sided population bound with $\log(2/\delta)$ replacing $\log(1/\delta)$, without a candidate-selection argument. A normalized metric in $[0,1]$ uses the same range lemma. 

For conditional event time, form the weighted ratio
\[
 \widehat t_{\rm cond}=
 \frac{\sum_iR_{iT}T_i\ind{T_i\le M}/\rho_i(T_i)}
      {\sum_iR_{iT}\ind{T_i\le M}/\rho_i(T_i)}
\]
when its denominator is positive. Although both numerator and denominator are unbiased totals, their ratio is not generally unbiased. We report a zero denominator as undefined, rather than assigning a value of zero.

\begin{algorithm}[t]
\caption{Population metrics from one \HARP acquisition}
\label{alg:metrics}
\begin{algorithmic}[1]
\STATE Freeze the full target horizon $M$ and an independent initial-prediction reference design.
\STATE Use $d_i=M$; optionally draw insured shallow/full horizons with positive full-depth mass.
\STATE Run Algorithm~\ref{alg:full}; retain reached prefixes and their effective weights.
\STATE Estimate the event probability by the sequential event-target version of~\eqref{eq:aht}.
\STATE Estimate restricted mean by~\eqref{eq:metricHT}.
\end{algorithmic}
\end{algorithm}

\section{Experimental Details}
\label{app:experiments}
\subsection{Benchmarks, event definitions, and data generation}
We use the datasets, generated conversations, event labels, and prediction caches from~\citep{feldman2026spend} without modification. The five settings are RealToxicityPrompts toxicity~\citep{gehman2020realtoxicityprompts}, Anthropic red teaming~\citep{ganguli2022red} with a Qwen judge or Llama Guard~\citep{inan2023llama}, SQuAD-based retrieval-grounded hallucination~\citep{rajpurkar-etal-2018-know}, and AutoIF instruction following~\citep{dong2025selfplay}. In safety settings, the event is the first response judged harmful or ungrounded; in AutoIF, it is the first response passing all executable checks. The two red-team judges give distinct evaluation configurations. We use one-based event times, with 201 representing no event within the $M=200$ paid-turn horizon.

Qwen 2.5 14B is the attacker/helper in the inherited PAIR-style conversation setup~\citep{chao2025jailbreaking}; Section~\ref{sec:experiments} lists the four targets. We retain \DAPRO's generation hyperparameters, judge parsing, and retry rules without retuning them for \HARP. Each configuration has 10,000 initial prompts, split into 4,000 for model training and 6,000 for acquisition and calibration evaluation. Conversation generation and model fitting are unchanged.

\subsection{Prediction model}
We embed text histories with Multilingual-E5-large \citep{wang2024multilingual}. The model projects these embeddings to 258 dimensions with ReLU and dropout .2, then applies three pre-normalized Transformer layers with six attention heads, feed-forward width 1,032, GELU, and a causal attention mask. A multilayer perceptron maps the concatenated representations to survival logits. The conditional output is an $M\times(M+1)$ PMF grid. Its final outcome class represents no event through $M$, and its support is restricted to times after the observed prefix.

Training uses Adam with learning rate $10^{-4}$, weight decay .001, and batch size 64 for at most 500 epochs. Ten percent of the training set is reserved for validation, with early-stopping patience of 50. We fit a time-dependent temperature using L-BFGS on the validation survival negative log-likelihood. During \HARP runs, row $t$ of the conditional PMF cache becomes available only after acquiring the first $t$ interactions. Reference planning uses row zero alone, conditioning this initial PMF on survival to construct surrogate later predictions. These proxies might not be the same as the model's predictions after observing a real future response.

\subsection{Acquisition settings and comparison accounting}
\begin{table}[t]\centering
\caption{Target budgets in paid interaction turns per parent calibration sample. Each setting uses the same budget for all four target models; both acquired methods include their planning and control costs.}
\label{tab:settings}
\begin{tabular}{lrrr}\toprule
Dataset/judge & LPB & UPB & Metrics\\\midrule
Toxicity & 20 & 20 & 20\\
Red team, Qwen & 10 & 20 & 20\\
Red team, Llama Guard & 10 & 20 & 20\\
Hallucination & 10 & 20 & 20\\
AutoIF & 10 & 20 & 20\\\bottomrule
\end{tabular}\end{table}
We use the LPB budgets from the corresponding \DAPRO\ comparison. For full-horizon metrics and UPBs, we use twenty turns per prompt throughout, retaining access to late outcomes while staying well below full-observation cost. Budget choice does not depend on the selected \HARP\ variant. With $N_0=2,000$, $R=100$, $m=25$, $M=200$ and $\eta=.05$, the reserve is $76\cdot210=15,960$ turns, or 7.98 per parent prompt, so both prescribed budgets are feasible. We use a continuation/proposal floor of .02, risk smoothing $\omega=.5$, and full-depth probability .25 for insured depth.

For \DAPRO, we use the two-bin, current-hazard soft-prefix policy with global regularization .001 and sample-cost cap $2\bar B$ in the independent CRC controller. Its first stage has fifty paid samples: twenty-five for fitting the policy and twenty-five for CRC. \HARP's independent reference set uses initial predictions, incurs no interaction cost, and is excluded from estimation. We normalize every cost by the same parent size; \HARP's estimator uses $N_0-R=1,900$ working samples.

For LPBs, the grid contains zero, a 256-point logarithmic low tail below .001, and 1,000 logarithmically spaced points from .001 to $10^{-.01}$, with prior .56. For UPBs, we use 3,001 equally spaced levels between zero and one, prior .98, and the semantic infinite fallback. We design acquisition separately for 70\%, 80\%, and 90\% coverage. 

\subsection{Choice of \HARP variant}
\label{app:selection}
We compare \HARP-C, \HARP-C+depth, \HARP-D, and \HARP-D+depth using matched calibration/test splits. For each experimental configuration,
we partition 6,000 nontraining prompts into two disjoint cohorts
of 3,000 prompts and perform ten random splits within each
cohort, each with 2,000 calibration prompts and 1,000 test
prompts. Variant selection uses only the ten runs from the
first cohort.

We select one variant per task across all datasets and target
models. For LPBs and UPBs, we rank the variants within each
configuration by their mean squared deviation from the target
coverage, then average these ranks equally across configurations.
Ties are resolved using bound size, favoring larger LPBs and
smaller UPBs. For metric estimation, we apply the same ranking
procedure to the average squared error of the event-rate and
restricted-mean estimates, with restricted-mean error normalized
by the evaluation horizon. This procedure selects \HARP-D+depth for
constructing LPBs and UPBs, and \HARP-D without depth for estimating metrics.
The displayed comparisons include all twenty runs, including
the ten used for selection, and therefore provide descriptive
comparisons rather than an independent held-out evaluation
of the selected variant.

\subsection{Statistics, figures, and provenance}
Each boxplot contains twenty run-level values and shows the median, quartiles, 1.5-IQR whiskers, and all outliers. Variance bars use sample variance with denominator nineteen. We compute ratios within configurations before aggregation; a zero baseline variance gives no finite ratio. Bound comparisons report nominal-coverage deviation, size, and UPB infinity fraction. Metric errors use the full mean of the corresponding evaluation cohort. Raw estimate variance can include differences between cohort means.

Both acquired methods use sequential AHT for event rate and terminal HT for restricted mean. The metric grids show absolute event-rate error, variance, and estimates with the benchmark-cohort means. Uncalibrated uses raw initial predictions and has no acquisition-cost box. We display weighted estimates without clipping them to the target's natural range. Actual spend can differ across methods.

\subsection{Completion and depth ablation}
\label{app:ablation}
We restrict the method ablation to the four C/D and depth variants. Table~\ref{tab:ablation} compares them over all twenty runs, without selecting a winner for each dataset. D uses a closed-form ratio of a model-based risk/cost target to accumulated reach, with a per-turn floor; it does not fit \DAPRO's time/score-bin policy. 

\begin{table}[htbp]\centering
\caption{Four-variant comparison over twenty runs. Entries are median within-configuration variance ratios to \DAPRO: LPB/UPB coverage and the two population metrics. All four variants are retained to assess C/D and depth.}
\label{tab:ablation}
\begin{tabular}{lrrrr}\toprule
Variant & LPB & UPB & Event rate & Restricted mean\\\midrule
\HARP-C (no depth) & 1.935 & 0.622 & 0.632 & 0.466\\
\HARP-C (+depth) & 1.017 & 0.678 & 1.081 & 0.720\\
\HARP-D (no depth) & 1.410 & 0.615 & 0.671 & 0.457\\
\HARP-D (+depth) & 0.948 & 0.585 & 1.200 & 0.819\\
\bottomrule\end{tabular}\end{table}

\subsection{Interpreting the contribution of continuation and depth}
Table~\ref{tab:ablation} shows that depth reduces the median LPB coverage-variance ratio for both C (1.935 to 1.017) and D (1.410 to 0.948). For UPBs, it reduces D's ratio from 0.615 to 0.585 but increases C's from 0.622 to 0.678. D with depth has the smallest median coverage-variance ratio among the four variants for both bound tasks, consistent with its saved selection by coverage-MSE ranks. 

For metrics, depth increases the event-rate and restricted-mean variance ratios under both C and D. In D, the ratios rise from 0.671 to 1.200 for event rate and from 0.457 to 0.819 for restricted mean. The task-level normalized-MSE criterion selects D without depth, whose mean rank is 1.45 versus C's 1.90. C nevertheless has the smaller median event-variance ratio, 0.632 versus 0.671, illustrating why event variance alone does not determine the two-metric choice. The LPB depth objective does not optimize metric MSE, and insured metric depth preserves the target at the cost of larger late-prefix weights. This provides a possible explanation for the observed tradeoff.

\section{All LPB Benchmark Comparisons}
\label{app:lpbexp}
We show each dataset at its sole primary budget, with all four targets and twenty runs. The three rows display coverage and size, absolute coverage deviation and paid cost, and coverage and size variance, respectively. \lpbchoice
\newcommand{\resultpanel}[4]{%
\begin{figure}[p]\centering\paperfigure{#1}{7.0}
\caption{\textbf{#2.} #3}\label{#4}\end{figure}}
\resultpanel{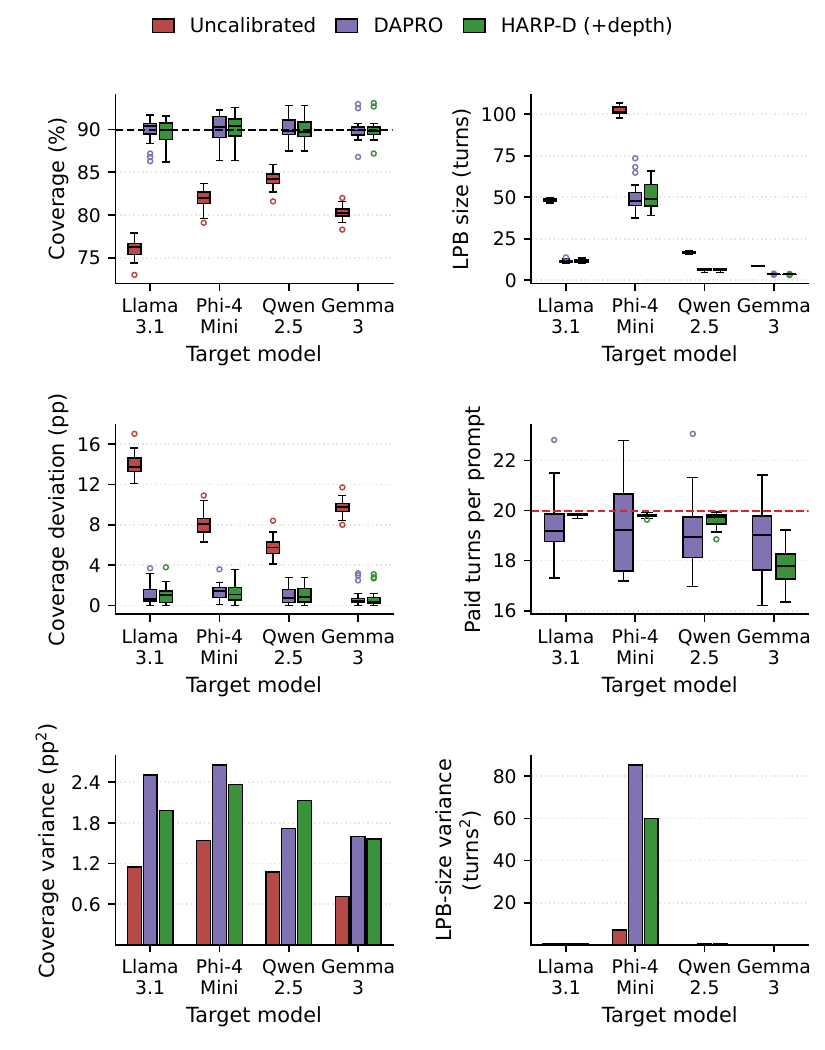}{Toxicity LPBs}{90\% coverage, $\bar B=20$ turns per sample, twenty runs per target. \lpbchoice\ Black/red dashed lines denote coverage/budget. All methods use matched splits.}{fig:lpb-toxicity}
\resultpanel{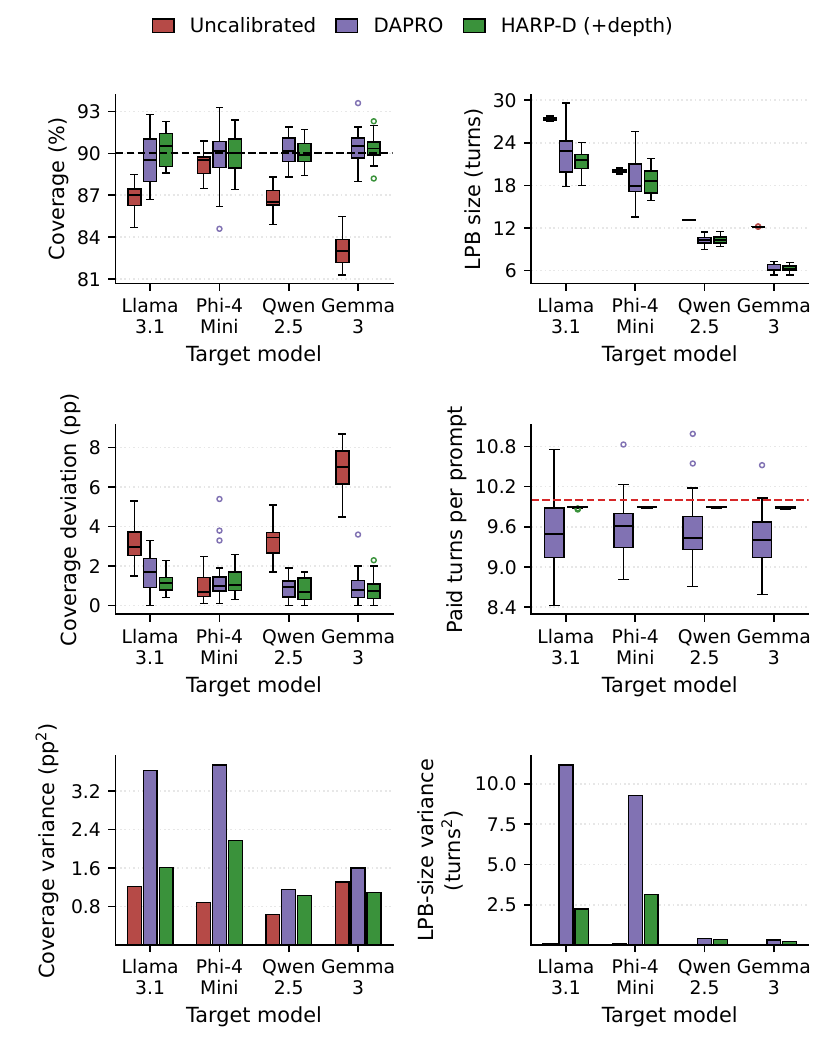}{Red team, Qwen judge LPBs}{90\% coverage, $\bar B=10$ turns per sample, twenty runs per target. \lpbchoice\ Black/red dashed lines denote coverage/budget. All methods use matched splits.}{fig:lpb-red_team_qwen}
\resultpanel{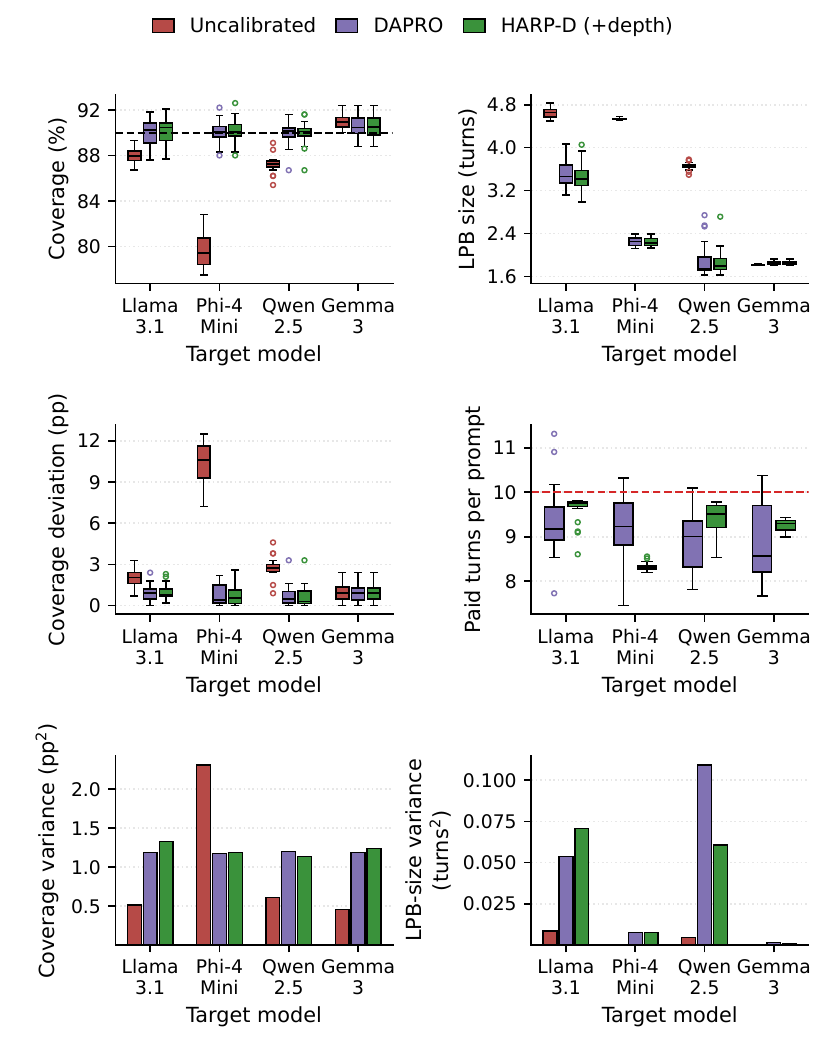}{Red team, Llama Guard LPBs}{90\% coverage, $\bar B=10$ turns per sample, twenty runs per target. \lpbchoice\ Black/red dashed lines denote coverage/budget. All methods use matched splits.}{fig:lpb-red_team_llama_guard}
\resultpanel{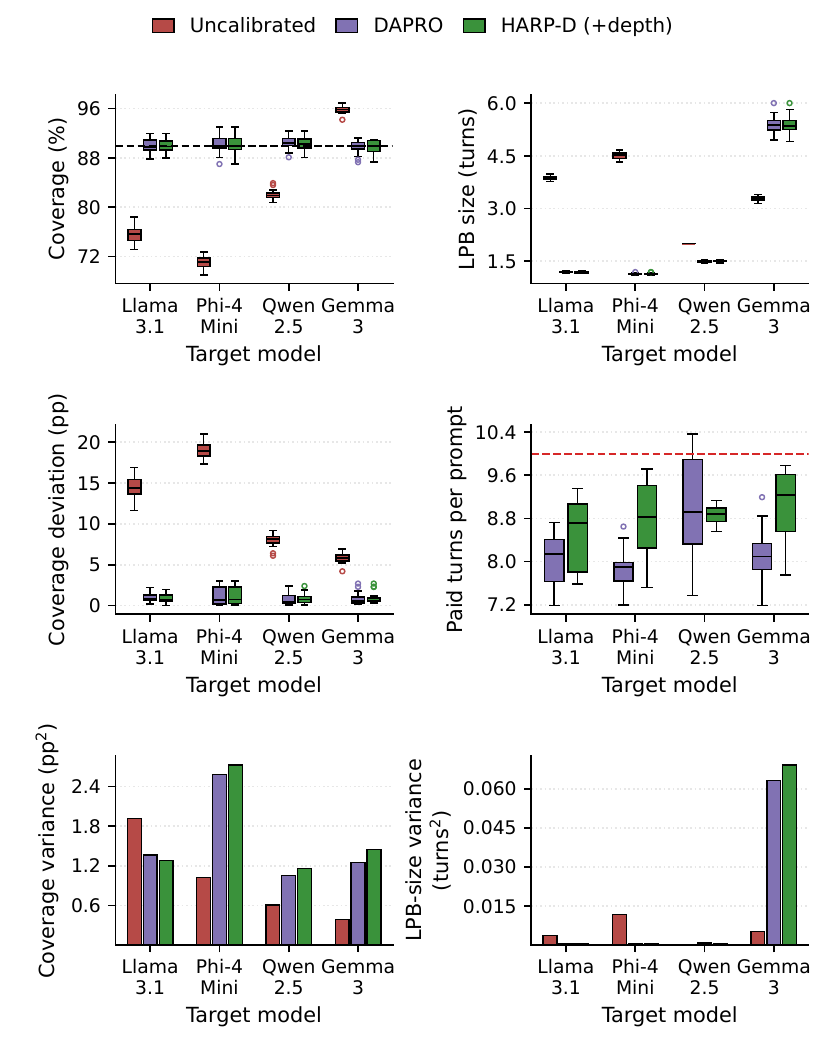}{Hallucination LPBs}{90\% coverage, $\bar B=10$ turns per sample, twenty runs per target. \lpbchoice\ Black/red dashed lines denote coverage/budget. All methods use matched splits.}{fig:lpb-hallucination3}
\resultpanel{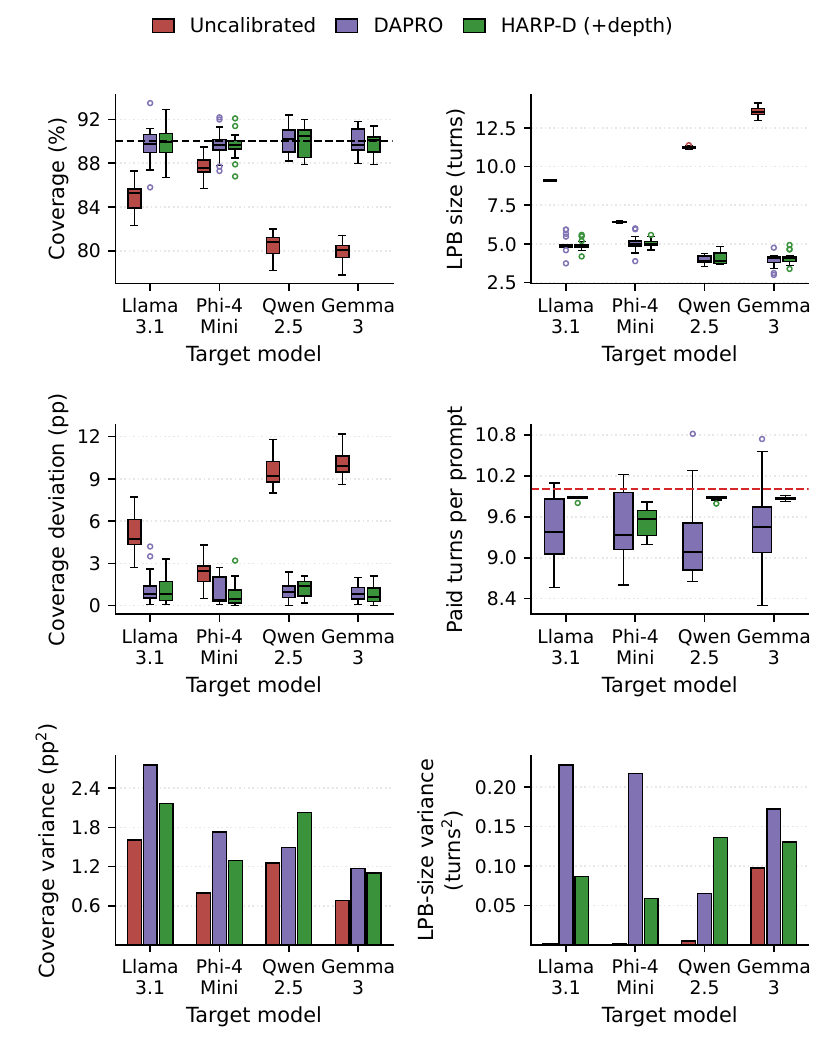}{AutoIF LPBs}{90\% coverage, $\bar B=10$ turns per sample, twenty runs per target. \lpbchoice\ Black/red dashed lines denote coverage/budget. All methods use matched splits.}{fig:lpb-autoif}
\clearpage
\section{All Population-Metric Benchmark Comparisons}
\label{app:metricexp}
For each dataset, we show one grid at $\bar B=20$, with all four targets and twenty runs. The rows display event-rate estimates and variance, restricted-mean estimates and variance, and paid cost and absolute event-rate error. Red dashed marks show the benchmark-cohort means; metric errors use the mean corresponding to each run. Both acquired methods use the same estimators. \metricchoice
\resultpanel{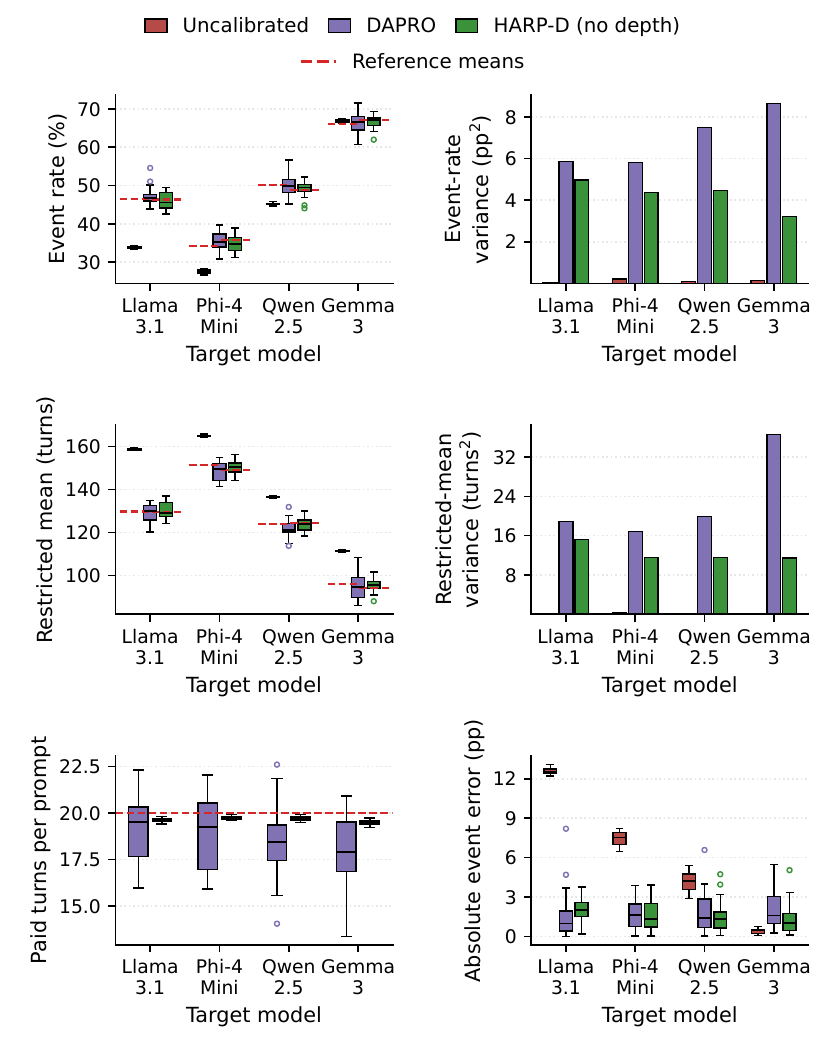}{Toxicity population metrics}{Budget $\bar B=20$ turns per sample, twenty runs per target. \metricchoice\ The event is harmful/ungrounded in safety settings and successful completion for AutoIF; restricted mean is $\E[\min(T,200)]$. Red cost lines mark the cap.}{fig:metric-toxicity}
\resultpanel{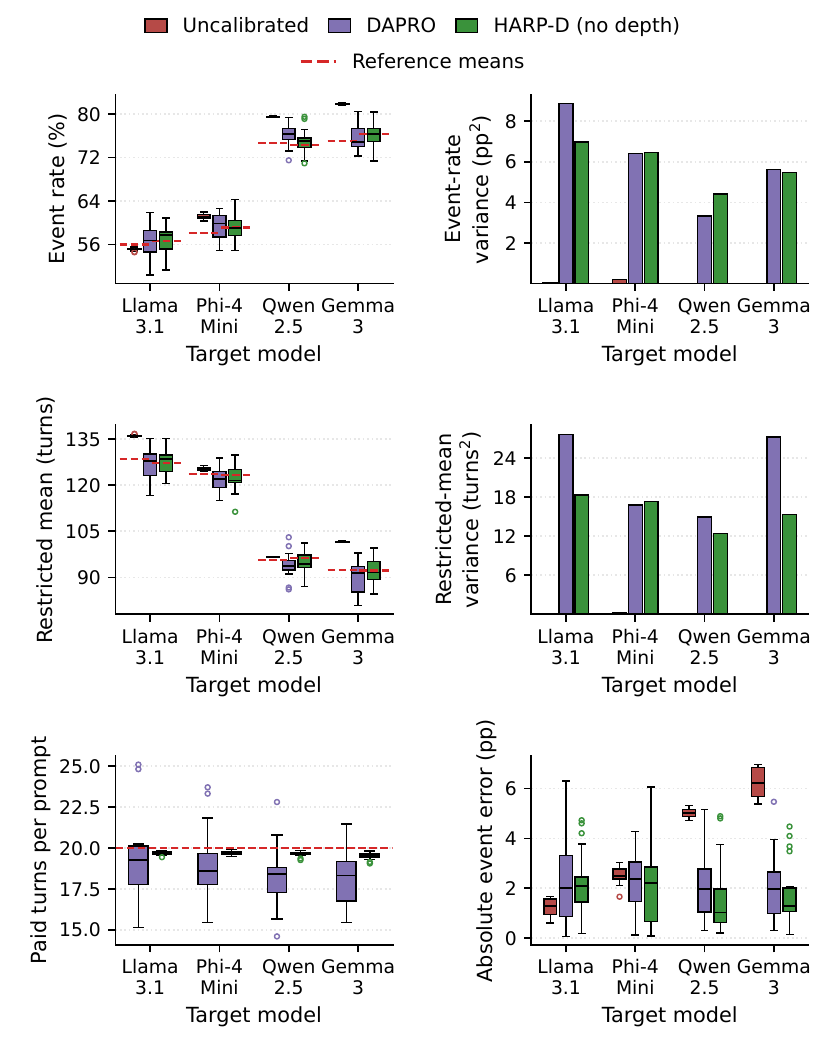}{Red team, Qwen judge population metrics}{Budget $\bar B=20$ turns per sample, twenty runs per target. \metricchoice\ The event is harmful/ungrounded in safety settings and successful completion for AutoIF; restricted mean is $\E[\min(T,200)]$. Red cost lines mark the cap.}{fig:metric-red_team_qwen}
\resultpanel{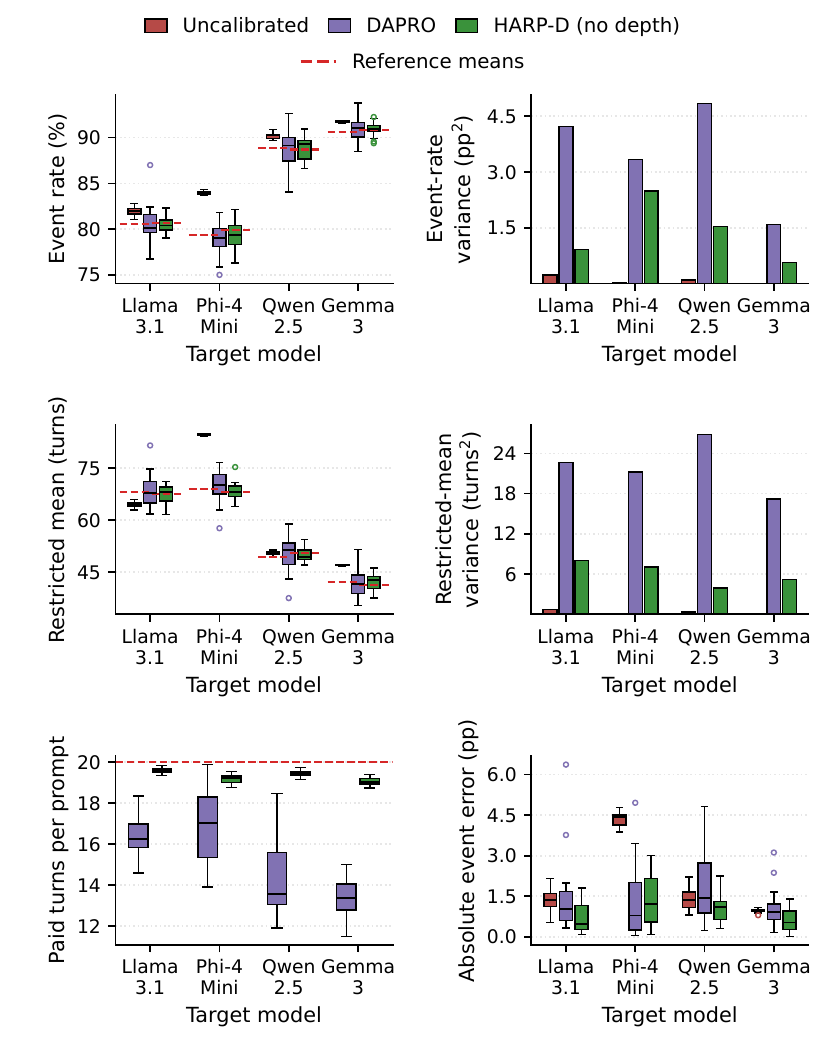}{Red team, Llama Guard population metrics}{Budget $\bar B=20$ turns per sample, twenty runs per target. \metricchoice\ The event is harmful/ungrounded in safety settings and successful completion for AutoIF; restricted mean is $\E[\min(T,200)]$. Red cost lines mark the cap.}{fig:metric-red_team_llama_guard}
\resultpanel{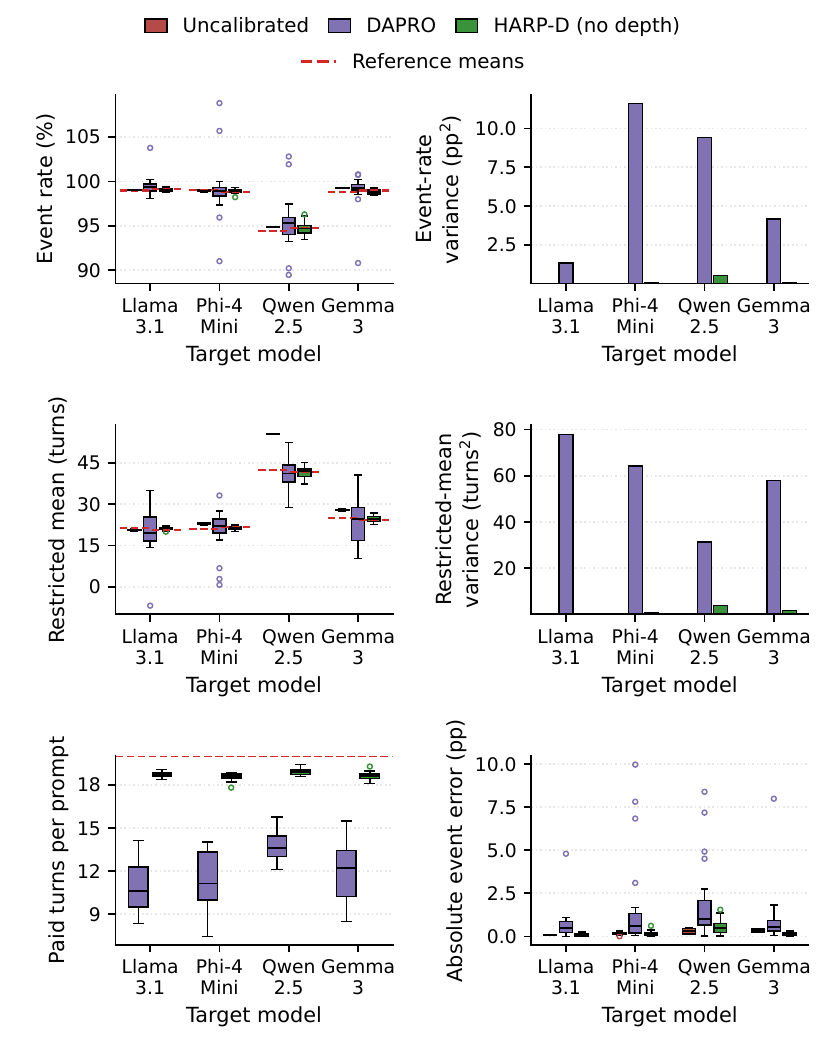}{Hallucination population metrics}{Budget $\bar B=20$ turns per sample, twenty runs per target. \metricchoice\ The event is harmful/ungrounded in safety settings and successful completion for AutoIF; restricted mean is $\E[\min(T,200)]$. Red cost lines mark the cap.}{fig:metric-hallucination3}
\resultpanel{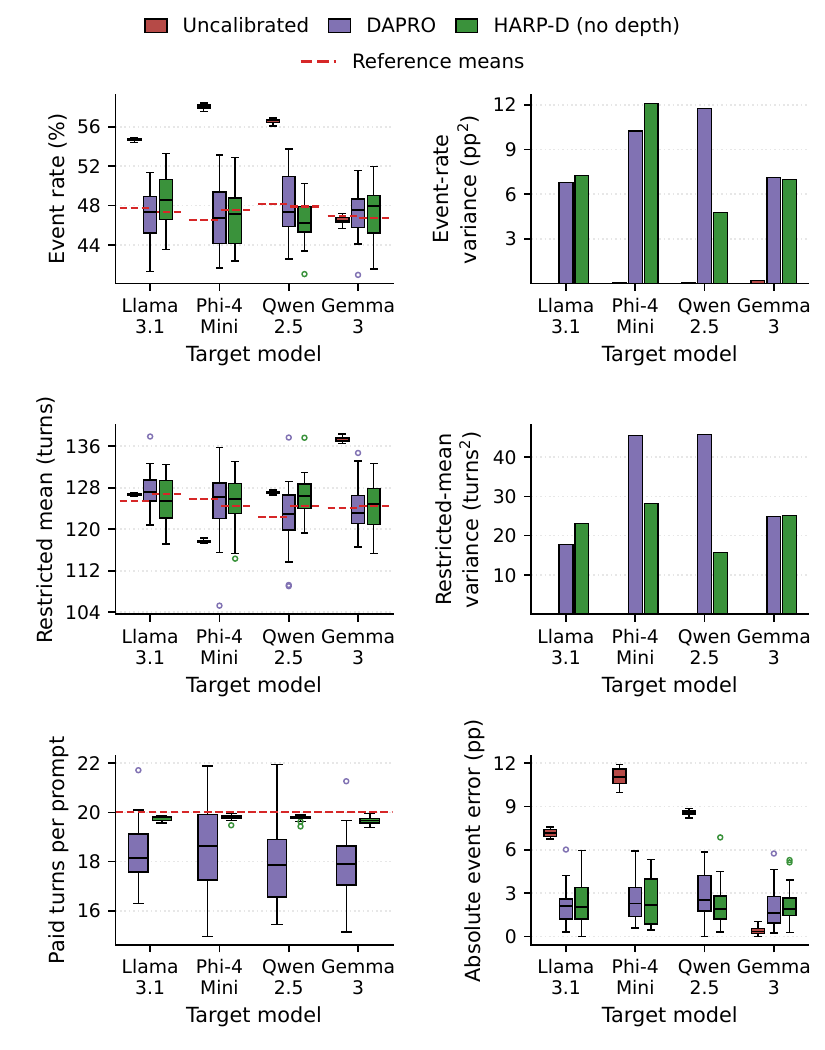}{AutoIF population metrics}{Budget $\bar B=20$ turns per sample, twenty runs per target. \metricchoice\ The event is harmful/ungrounded in safety settings and successful completion for AutoIF; restricted mean is $\E[\min(T,200)]$. Red cost lines mark the cap.}{fig:metric-autoif}
\end{document}